\newcommand\barbelow[1]{\stackunder[1.2pt]{$#1$}{\rule{.8ex}{.075ex}}}
\def\R{\mathbb{R}}
\def\loss{\ell}
\def\e{\varepsilon}
\def\hypnet{\mathcal{H}}
\def\hypwei{\eta}
\def\preem{a}
\def\spm{|}
\newcommand{\norm}[2]{\left\lVert#2\right\rVert}
\DeclarePairedDelimiter\floor{\lfloor}{\rfloor}
\newcommand\munderbar[1]{%
  \underaccent{\bar}{#1}}
\newtheorem{theorem}{Theorem}[section]
\newtheorem{lemma}[theorem]{Lemma}
\newtheorem{prop}{Proposition}
\def\our{HINT}
\title{\our{}: 
Hypernetwork Approach to Training Weight Interval Regions in Continual Learning}
\title{\our{}: 
Hypernetwork Approach to Training Weight Interval Regions in Continual Learning}
\author {
    % Authors
    Patryk Krukowski\textsuperscript{\rm 1 \rm 2},
    Anna Bielawska\textsuperscript{\rm 2},
    Kamil Książek\textsuperscript{\rm 2}
    Paweł Wawrzyński\textsuperscript{\rm 4}
    Paweł Batorski\textsuperscript{\rm 3}
    Przemysław Spurek\textsuperscript{\rm 2}
}
\title{\our{}: 
Hypernetwork approach to training weight interval regions in continual learning}
\author {
    Author Name
}
\title{My Publication Title --- Multiple Authors}
\author {
    % Authors
    First Author Name\textsuperscript{\rm 1},
    Second Author Name\textsuperscript{\rm 2},
    Third Author Name\textsuperscript{\rm 1}
}
\begin{document}

\maketitle

\begin{abstract}
Recently, a new Continual Learning (CL) paradigm was presented to control catastrophic forgetting, called  Interval Continual Learning (InterContiNet), which relies on enforcing interval constraints on the neural network parameter space. 
Unfortunately, InterContiNet training is challenging due to the high dimensionality of the weight space, making intervals difficult to manage. 
To address this issue, we introduce \our{}, a technique that employs interval arithmetic within the embedding space and utilizes a hypernetwork to map these intervals to the target network parameter space. We train interval embeddings for consecutive tasks and train a hypernetwork to transform these embeddings into weights of the target network. An embedding for a given task is trained along with the hypernetwork, preserving the response of the target network for the previous task embeddings. Interval arithmetic works with a more manageable, lower-dimensional embedding space rather than directly preparing intervals in a high-dimensional weight space. Our model allows faster and more efficient training. Furthermore, \our{} maintains the guarantee of not forgetting.      
At the end of training, we can choose one universal embedding to produce a single network dedicated to all tasks. In such a framework, hypernetwork is used only for training and, finally, we can utilize one set of weights.
\our{} obtains significantly better results than InterContiNet and gives SOTA results on several benchmarks. 
\end{abstract}

% Uncomment the following to link to your code, datasets, an extended version or similar.
%
% \begin{links}
%     \link{Code}{https://aaai.org/example/code}
%     \link{Datasets}{https://aaai.org/example/datasets}
%     \link{Extended version}{https://aaai.org/example/extended-version}
% \end{links}

%%%%%%%%%%%%%%%%%%%%%%%%%%%%%%%%%%%%%%%%
\section{Introduction}
%%%%%%%%%%%%%%%%%%%%%%%%%%%%%%%%%%%%%%%%

Humans have a natural ability to learn from a continuous flow of data, as real-world data is usually presented sequentially. Humans need to be able to learn new tasks while also retaining and utilizing knowledge from previous tasks. While deep learning models have achieved significant success in many individual tasks, they struggle in this aspect and often perform poorly on the preceding tasks after learning new ones, which is known as catastrophic forgetting~\cite{mccloskey1989catastrophic,ratcliff1990connectionist,french1999catastrophic}.

Continual learning (CL) is an important area of machine learning that aims to bridge the gap between human and machine intelligence. While several methods have been proposed to effectively reduce forgetting when learning new tasks, such as that by~\citet{kirkpatrick2017overcoming,lopezpaz2017gradientEM,Shin2017ContinualLW,aljundi2018memory,Masse2018AlleviatingCF,Rolnick2019ExperienceRF,vandeVen2020BraininspiredRF}, they typically do not provide any solid guarantees about the extent to which the model experiences forgetting. 

In the Interval Continual Learning (InterContiNet) \cite{wolczyk2022continual}, authors propose a new paradigm that uses interval arithmetic in CL scenarios. The main idea is to use intervals to control weights dedicated to subsequent tasks. Thanks to strict interval arithmetic, these authors can enforce a network to give the same prediction for each weight sampled from the interval. Moreover, they can force the interval of the new task to be entirely contained in the hyperrectangles of the previous ones. The two above properties allow one to obtain strict constraints for forgetting. The model has strong theoretical fundamentals and obtains good results in incremental tasks and domain scenarios 
%related to regularization-based methods 
on relatively small datasets. 
% \kamil{Zostawiam komentarz, bo na razie nie wiem, jak poprawić. Nie rozumiem tego zdania. Model osiąga dobre wyniki w scenariuszach związanych z metodami regularyzacyjnymi? To są scenariusze pod metody? Poza tym pojawiające się "we" w tym akapicie sugeruje, że to dotyczy naszego modelu, a nie InterContiNetu. Może taktycznie będzie to zmienić?} \PawelW{proponuję po prostu usunąć ten wpis o regularyzacji}
The main limitation of InterContiNet is a complicated training process. To allow for the training of new tasks, one has to use large intervals in the weight space, which is trained in an extremely high-dimensional parameter space. Consequently, the model is limited to simple architectures, datasets, and continuous learning scenarios.  

\begin{figure*}
% \begin{wrapfigure}{R}{0.55\textwidth}
% \vspace{-0.5cm}
    \begin{center}
    
    \begin{tikzpicture}[scale=1.0]
    % \node[inner sep=0pt] (russell) at (-5.0,0)

    \pgftext{\includegraphics[width=0.7\textwidth]{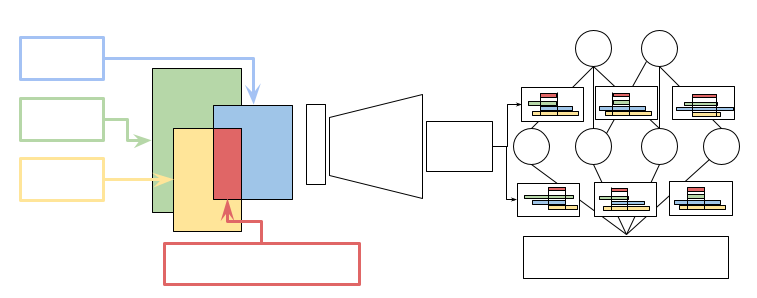}} at (0pt,0pt);
    % https://docs.google.com/drawings/d/1kO7QiqLnuYASIBVThyQ8y-dYYolnaFJnNJgYgnWot6E/edit?usp=sharing
    \node[scale=0.8] at (-0.02,0.0) {$\hypnet([\barbelow e_t, \bar e_t]; \hypwei)$};  
    \node[scale=0.8] at (-5.25,1.5) {$[\barbelow e_1, \bar e_1]$};
    \node[scale=0.8] at (-5.25,0.5) {$[\barbelow e_2, \bar e_2]$};
    \node[scale=0.8] at (-5.25,-0.5) {$[\barbelow e_3, \bar e_3]$};
    \node[scale=0.8] at (-1.9,-1.9) {$[\barbelow{e}, \bar{e}] = \bigcap_{t=1}^T [\barbelow{e}_t, \bar{e}_t]$};
    \node[scale=0.8] at (1.4,0.0) {$[\barbelow \theta_t, \bar \theta_t]$};
    \node[scale=0.8] at (4.1,-1.8) {$[\barbelow z_{x,t}, \bar z_{x,t}] = \phi(x; [\barbelow \theta_t, \bar \theta_t])$};

    % \node[scale=0.8, rotate=90] at (-9.2,0.0) {Semi-binary Mask };
    % \node[scale=0.9] at (-7.0,0.0) {$\bigotimes$ };
    % \node[scale=1.0] at (2.1,0.0) {\bf $=$ };
    % \node[text width=7cm] at (4.,-5.5) { \color{OliveGreen} Non-Trainable parameters (input images) };
    \end{tikzpicture}
    \end{center}
    % \PawelW{Coś w tym rysunku trzeba zmienić, bo właściwie nie widać, dlaczego to przecięcie jest akurat tu, gdzie jest. czerwony jest częścią wspólną 1 i 3, a 2 jest dopasowane do nich - nie wiadomo czemu.}
    % \KamilMain{Minimum, co można zrobić, to dodać przezroczystość do tych prostokątów. Kiedyś proponowałem drugą część tego rysunku, gdzie miało być przecięcie embeddingów na kolejnych współrzędnych. Oprócz tego TeXowe wzory są tutaj niewyraźne}
    % \vspace{-0.5cm}
    \caption{\our{} uses interval arithmetic in the input to the hypernetwork. After propagating the intervals through the hypernetwork, we obtain the intervals on the target network layers. The intersection of all intervals produces universal embeddings dedicated to all tasks. Our model gives theoretical guarantees for not forgetting.
    % In our method, we combine the Interval Bound Propagation \cite{gowal2018effectiveness} paradigm in the hypernetwork and InterContiNet \cite{wolczyk2022continual} in the target network.
    }
    % \vspace{-0.5cm}
    \label{fig:merf_appro}
% \end{wrapfigure}
\end{figure*}

To solve such a problem, we propose \our{}\footnote{The source code is available at https://github.com/gmum/HINT}, i.e. a model which uses intervals in the embedding space and a hypernetwork to transfer them to the weight space of the target network, see Fig.~\ref{fig:merf_appro}.
Such a solution allows one to effectively train 
interval-based target network on large datasets and the most difficult incremental class scenarios, surpassing its foregoing limitations.
A hypernetwork architecture~\cite{ha2016hypernetworks} is defined as a neural network that generates weights for a specific target network designed to solve a particular problem. In continual learning, a hypernetwork may generate weights for the target model~\cite{von2019continual,henning2021posterior,ksikazek2023hypermask}. Dependent on the task identity, trainable embeddings are fed into the hypernetwork. After training is complete, a single meta-model can create task-specific weights. This ability to generate distinct weights for various tasks enables models based on hypernetworks to maintain minimal knowledge loss.

In \our{}, we use interval arithmetic in an embedding space. 
The hypernetwork is fed with task specific interval embedding to produce weights for the target network. Our architecture uses a target model with interval weights and Interval Bound Propagation (IBP) \cite{gowal2018effectiveness} technique in the hypernetwork. Due to low dimension of the task embeddings, this solution provides greater robustness to forgetting than previous approaches to CL based on interval arithmetic. 

%PW: było: Our \our{} produce intervals in a low-dimensional embedding space instead of a high-dimensional parameter space \kamil{Może warto jeszcze raz w tym miejscu w zdaniu lub dwóch dodać, dlaczego się na to decydujemy?}.  

Our model can be used in incremental class learning in both scenarios: with and without task identity. We can use the evaluation scenario proposed by hypernetwork-based models~\cite{von2019continual,henning2021posterior,ksikazek2023hypermask}. 
For known task identity scenarios, we use interval arithmetic as a regularization technique, and each task embedding produces new target network interval weights dedicated to this task. 
We can also use an entropy criterion to determine a class identity when it is unknown. 
In both solutions, we must memorize hypernetwork and all task embeddings. 
Our framework allows us to produce a universal embedding.
With interval arithmetic, we can control intervals and transform them into interval embeddings with non-empty intersections. % The intersection of intervals, i.e. a universal embedding, can be used for solving multiple tasks. In such a scenario, we use the hypernetwork only as a {\em meta-trainer}, which is not used in inference.
At the end of the training, we generate the final target network from the universal embedding created from the intersection of all previously trained interval embeddings, being able to solve all tasks. Such an approach significantly outperforms InterContiNet and in many cases outperforms state-of-the-art methods. 
% \PatrykSide{Tutaj trzeba podkreślić, że niepuste przecięcia nie powstają tylko i wyłącznie dzięki arytmetyce interwałowej, ale głównie przez to, że pompujemy interwały w przestrzeni, która powstaje przez zmapowanie punktów z interwału przez odwzorowanie $f\left(x\right) = \cos\left(x\right)$}

Our contributions can be summarized as follows:
\begin{itemize}
    \item We show \our{}, which uses interval arithmetic in the embedding space and a hypernetwork to propagate intervals to the target network weight space.
    \item We demonstrate that our model can efficiently use interval arithmetic in CL settings with large datasets.
    \item We demonstrate that a hypernetwork can be used to recursively refine, across consecutive tasks, the region of target network weights that are universally effective for all tasks.
\end{itemize}

%%%%%%%%%%%%%%%%%%%%%%%%%%%%%%%%%%%%%%%%%%%%%%%%%%%%%%%%%%
\section{Related Works}
%%%%%%%%%%%%%%%%%%%%%%%%%%%%%%%%%%%%%%%%%%%%%%%%%%%%%%%%%%

\paragraph{Continual learning.}
When a neural network is trained on new data, it often forgets patterns learned from previous data, a problem known as \emph{catastrophic forgetting} \cite{french1999catastrophic}. Continual learning methods enable the network to learn new tasks without losing performance on earlier ones, even when previous data is no longer available. Different setups exist for continual learning~\cite{2019vandeven+1}: in task-incremental learning, the task is known during inference; in class-incremental learning, the network must learn a consistent input-output distribution across tasks with varying data distributions.

% Suppose we have trained a~neural network with a~chunk of data and then we apply a typical training routine to the same network with another chunk of data. That usually leads to {\it catastrophic forgetting} \cite{french1999catastrophic}, i.e., the network ceases to represent patterns trained on the earlier chunk of data. {\it Continual learning} is a set of training methods that allow the network to adjust to consecutive chunks of data, also referred to as {\it tasks}, while retaining fitness to previous chunks, now inaccessible. Several setups for continual learning are considered \cite{2019vandeven+1}. In a task-incremental setup, we assume that in the inference a data sample belongs to a certain known task. In a class-incremental setup, we assume that the tasks may vary in data distribution, yet share a common input-output conditional distribution that the network must learn to represent.

Rehearsal methods store a limited-size representation of previous tasks and train the network on a combination of data samples from the previous and current tasks. In the simplest setup, a memory buffer is used to store selected previous samples in their raw form \cite{lopezpaz2017gradientEM,2019aljundi+6,2020prabhu+2} or store exemplars of classes in the class incremental (CIL) setting \cite{2017rebuffi+3,2018chaundry+3,2018castro+4,2019wu+5,2019hou+5,2019belouadah+1}. Some methods instead of using raw data samples, store data representations of previous tasks in different
forms, e.g., optimized artificial samples \cite{2020liu+4}, distilled datasets \cite{2018wang+3,2021zhao+2} or addressable memory structures \cite{2022deng+1}. 

Buffer-based approaches raise questions of scalability and data privacy. To address these questions, generative rehearsal methods are based on generative models that produce data samples similar to those present in previous tasks, instead of just replaying them from a~storage. In Deep Generative Replay \cite{Shin2017ContinualLW}, a Generative Adversarial Network (GAN) is used as a model to generate data from the previous tasks. For the same purpose, Variational Autoencoders (VAE) are used in \cite{2018vandeven+1}, Normalizing Flows in \cite{2018nguyen+3}, Gaussian Mixture Models in \cite{2019rostami+2} and Diffusion Models in \cite{2024cywinski+4}. 

Regularization methods \cite{kirkpatrick2017overcoming,2017zenke+2,li2017learning} are designed especially for the task incremental (TIL) setup. They identify neural weights with the greatest impact on task performance and slow down their learning on further tasks through regularization.

Architectural methods \cite{rusu2016progressive,2018yoon+3,mallya2018packnet,Mallya2018PiggybackAA,2020wortsman+6} are especially designed for the TIL setup. They adjust the structure of the model for each task.

\citet{wolczyk2022continual} introduced the InterContiNet algorithm. For each task, this method identifies a region in the weight space in which any weight vector assures a good performance. A weight vector proper for all tasks is taken from the intersection of these regions. In InterContiNet, these regions take the form of hyperrectangles. The training of InterContiNet in a continual learning setting is problematic since intervals in the high-dimensional parameter space need to be controlled. To alleviate these problems, the authors engineered an elaborate training procedure. However, in a multidimensional weight space, it is difficult to optimize weight hyperrectangles for different tasks to have a non-empty intersection. Consequently, InterContinNet is limited to relatively simple architectures, datasets, and CL scenarios. 
%The above optimization procedure is exposed to many problems in practical applications. In a multidimensional weight space, it is difficult to optimize weight hyperrectangles for different tasks to have a non-empty intersection. Consequently, the model is limited to relatively simple architectures, datasets, and CL scenarios. To solve the above problem, we propose \our{}, a model which uses intervals in the embedding space and a hypernetwork to produce intervals in the weight space, as presented in Fig.~\ref{fig:merf_appro}. In \our{}, the weights of the target networks are produced by a meta-model, which is trained for all tasks and generates dedicated weights for each subsequent task. \our{} allows one to effectively train InterContiNet on large datasets and the most difficult incremental class scenarios. In addition, we can retain all InterContiNet properties and maintain guarantees of not forgetting. 
In \cite{henning2021posterior}, the regions in the weight space take the form of normal distributions. In the architecture presented there, a hyper-hypernetwork transforms inputs into task-specific embeddings which a hypernetwork transforms into those distributions in the weight space of the target network. The current work develops the same emerging family of methods that could be called regional methods. However, here we consider regions that result from the transformation of low-dimensional cuboids assigned to tasks by a~hypernetwork. This way, we overcome the issue of scalability and present an architecture capable of dealing with complex architectures, datasets and CL scenarios.

% \paragraph{Interval arithmetic for neural networks}
% In deep learning, interval arithmetic was used in three main areas. First of all, we can use interval arithmetic to deal with the situation where we have only uncertain information about the data.  
% \citet{chakraverty2014interval} presented an interval artificial neural network (IANN) that can handle input and output data represented as intervals. This architecture was further adapted for viruses data with uncertainty modeled by intervals~\cite{chakraverty2017novel,sahoo2020structural}.

% In~\cite{gowal2018effectiveness,morawiecki2019fast,mirman2019provable}, interval arithmetic was used to produce neural networks that were robust against adversarial attacks. In such an approach, authors used worst-case cross-entropy to train the classification model.
% In~\cite{proszewska2021hypercube}, the architecture was employed for representing voxels in 3D shape modelling.

% In InterContiNet \cite{wolczyk2022continual}, the interval arithmetic was applied in a neural network with interval weights. This network transforms a given input into a loss interval. In training, the weight intervals are optimized to minimize the average loss for the weight interval center, while also limiting the average loss upper bound on the weight intervals. 

% In contrast to these previous works, we apply interval arithmetic to weights instead of input data. 

%%%%%%%%%%%%%%%%%%%%%%%%%%%%%%%%%%%%%%%%
%\section{Interval arithmetic in Continual Learning scenario}
%%%%%%%%%%%%%%%%%%%%%%%%%%%%%%%%%%%%%%%%

% ---------------------------------------------------\\

\section{Method: \our{}}

Our proposed CL architecture, dubbed \our{}, is presented in Fig.~\ref{fig:merf_appro}, and based on the following logic. Hyperrectangular, learnable task embeddings are fed to the hypernetwork, which produces weights for the target network, which solves the CL task. Both networks use the interval arithmetic. The intersection of all task interval embeddings forms a universal interval embedding, from which every embedding is suitable for all CL tasks. The training procedure preserves the performance of the architecture on previous tasks. Below, we present separate parts of this architecture. We commence with a presentation of the target network and the principles of interval neural networks, and then we describe the interval hypernetwork. 

\subsection{Interval target network}

%We commence with describing InterContiNet \cite{wolczyk2022continual} (Interval Continuous Learning), which shows how interval arithmetic can be used in CL scenarios. 

In interval neural networks, (\citeauthor{dahlquist2008numerical} \citeyear{dahlquist2008numerical}, Sec. 2.5.3, \citeauthor{wolczyk2022continual} \citeyear{wolczyk2022continual})
instead of considering particular points $\vartheta \in \mathbb{R}^D$ in the parameter space, regions $\Theta \subset \mathbb{R}^D$ are used. 
%In InterContiNet, the authors use interval arithmetic \cite{dahlquist2008numerical} (Section 2.5.3) and operations on segments (hyperrectangles).
The hyperrectangle $[\barbelow \theta, \bar \theta]$ is a Cartesian product of one-dimensional intervals
$$
[\barbelow \theta, \bar \theta] = [\barbelow \theta^{(1)}, \bar \theta^{(1)}] \times [\barbelow \theta^{(2)}, \bar \theta^{(2)}] \times \ldots \times [\barbelow \theta^{(D)}, \bar \theta^{(D)}] \subset \mathbb{R}^D, 
$$ 
where by $\theta^{(i)} \in [\barbelow \theta^{(i)}, \bar \theta^{(i)}]$ we denote the $i$-th element of $\theta$. 
Interval arithmetic utilizes operations on segments. 
By $\phi(x;\theta)$ we denote the target network, which is a layered neural classifier with input $x$ and weights $\theta$. 
%In InterContiNet \cite{wolczyk2022continual}, weights of the network are assumed 
We assume the weights $\theta$ to be intervals, thus altogether define a hyperrectangle, $[\barbelow \theta, \bar \theta]$. Therefore, for a given input $x$, the network produces a hyperrectangular output 
$
    [\barbelow z, \bar z] = \phi(x; [\barbelow \theta, \bar \theta]), 
$
rather than a vector. Appendix \ref{sec:ia-in-nns} provides details of the interval network inner workings. 

%PawełW: zakomentowałem, bo nie wiem, co nowego to wnosiło do opisu
%Using this framework, interval arithmetic can be applied to propagate any input $z_0$ across the entire network, resulting in $N$ interval logits for $N$ different classes $[\underline{z}_L, \overline{z}_L]$. 
%Due to precise computations using interval arithmetic, for $(W_1, \ldots, W_L) \in [\barbelow W_1, \bar W_1]\times \ldots\times [\barbelow W_L, \bar W_L ]$
%we have
%$$
%\phi(x; W_1, \ldots, W_L) \in [\underline{z}_L, \overline{z}_L] = \phi(z_0;[\barbelow W_L, \bar W_L],\ldots, [\barbelow W_1, \bar W_1 ]).
%$$

Since intervals are used instead of points, the worst-case cross-entropy is used instead of classical ones. 
The worst-case interval-based loss is defined by
\begin{equation}
\hat{\loss}(x, y; \left[\barbelow\theta, \bar\theta\right]) =  \loss(\hat{z}, y), 
\end{equation}
where $x$ is an observation, $y$ is an~one-hot encoded class of $x$, $\loss\left(\cdot, \cdot\right)$ is the cross-entropy loss, and $\hat{z}$ is a vector with the $i$-th element defined as:
$$
\hat{z}^{(i)} = \left\{
\begin{array}{ll}
     \bar{z}^{(i)}, & \mbox{ for } y^{(i)}=0,\\[0.8ex]
     \barbelow{z}^{(i)}, & \mbox{ for  } y^{(i)}=1,
\end{array}
\right.
$$
where $z = \phi(x; \left[ \barbelow\theta, \bar\theta \right])$. 
The worst-case cross-entropy, as shown in \cite[Theorem 3.1]{wolczyk2022continual}, gives a strict upper limit on cross-entropy 
\begin{equation}
\hat{\loss}(x, y; [\barbelow \theta, \bar \theta]) \geq \max_{\theta \in [ \munderbar \theta, \bar \theta]} \loss(\phi(x;[\theta,\theta]), y).
\end{equation}

%The  InterContiNet network is applicable in continual learning settings, ensuring no forgetting. Practically, 
For the continual learning challenge of optimizing across the tasks $1, \ldots, T$, during the training on a specific task $t$, the goal is to achieve the optimal worst-case cross-entropy within the intervals $[\barbelow \theta_t, \bar \theta_t]$ where $[\barbelow \theta_{t}, \bar \theta_{t}]\subseteq[\barbelow \theta_{t-1}, \bar \theta_{t-1}]$. Interval arithmetic facilitates non-forgetting by maintaining the weights interval of task $t$ within the weights interval trained for task $t-1$.

%Concluding,  InterContiNet utilizes a conventional feed-forward neural network $\phi$ with parameters $\theta$, but rather than focusing on specific points in the parameter space, it considers regions $[\barbelow\theta,\bar\theta]$. Specifically,  InterContiNet 
Technically, the interval classifier $\phi$ employs for task $t$ the parameter of the network with the central point $\theta_t$ and an interval radius $\e_t\in\mathbb{R}^D$. Therefore, the parameter region is defined as 
$[\barbelow \theta_t, \bar \theta_t] = [\theta_t - \e_t, \theta_t + \e_t].$ 
With this approach, the network can still function as a conventional non-interval model by using only the central weights $\theta_t$, which consequently generates only the central activations for each layer. Concurrently, the activation intervals $[\barbelow z_l, \bar z_l]$ can be computed with interval arithmetic. As detailed by \citet{gowal2018effectiveness}, these processes can be efficiently implemented on GPUs by simultaneously computing the central activations and their corresponding radii.  %InterContiNet 
The interval neural network redesigns the fundamental components of neural networks (such as fully-connected or convolutional layers, activations, and pooling) to accommodate interval inputs and weights.

\subsection{Interval hypernetwork}\label{sec:interval_hypernetwork}
%%%%%%%%%%%%%%%%%%%%%%%%%%%%%%%%%%%%%%%%

In this section, we introduce a hypernetwork that transforms hyperrectangles in a~low-dimensional embedding space into regions in the weight space of the target network. We demonstrate the effectiveness of this joint architecture. Finally, we show that the hypernetwork may be used only in training, while during inference only the target network is utilized. A region in the target network weight space is shaped for all CL tasks iteratively, i.e. new constraints are defined for subsequent tasks. Finally, we can create a universal region of weights with a universal embedding vector. Still, we can produce weights for each task with dedicated embeddings, ensuring higher performance for individual tasks, but at the expense of less universality.

\our{} consists of the interval hypernetwork directly preparing weights for the interval target network which finally performs a classification. The consecutive parts of the \our{} architecture will be described in the next few paragraphs.

% \KamilSide{Trochę robiliśmy to już w poprzedniej sekcji} We start from the hypernetwork paradigm in the CL setting. Then we show that such a framework can be integrated with InterContiNet. Finally, the hypernetwork can be seen as a meta-trainer used only in training. 

\paragraph{Hypernetwork}

Introduced in \cite{ha2016hypernetworks}, hypernetworks are neural models that produce weights for a distinct target network designed for solving a specific problem. %The objective is to minimize the number of trainable parameters by creating a hypernetwork possessing fewer parameters than the target network. 
Hypernetworks were already used in continual learning~\cite{von2019continual,henning2021posterior,ksikazek2023hypermask}.
In this context, they generate unique weights for individual CL tasks.

HNET~\cite{von2019continual} introduces trainable embeddings ${e}_t \in \mathbb{R}^{M}$, one for each task $t$, where $t \in \lbrace 1, ..., T \rbrace$. The hypernetwork, denoted as $\hypnet$, and parameterized by $\hypwei$, outputs the weights specific to the $t$-th task, $\theta_t$, for the target network $\phi$, as shown in the equation: 
$
\hypnet(e_t; \hypwei) = \theta_t.
$

% The HNET's meta-architecture, or hypernetwork, is designed to generate unique weights for each task in continual learning.
The function $\phi(\cdot\,;\theta_t) : X \to Y$
%\PatrykSide{Moim zdaniem, jeżeli używamy $\theta_t$ w definicji tego odwzorowania, to powinniśmy także pisać wszędzie $X_t$ oraz $Y_t$, by zaznaczyć, że para $\left(X_t, Y_t\right)$ pochodzi z $t$-tego tasku. PW: no nie, bo $X$ i $Y$ to tu są domenty we/wy i one są takie same dla każdego tasku}
represents a neural network classifier with weights $\theta_t$ generated by the hypernetwork $\mathcal{H}(\cdot\,;\hypwei)$ with weights $\eta$, and assigns labels for samples in a given task. Notably, the target network itself is not trained directly. Within HNET, a hypernetwork $\hypnet(\cdot\,;\hypwei):~\mathbb{R}^M \ni {e}_t  \mapsto \theta_t$ computes the weights $\theta_t$ for the target network $\phi$ based on the dedicated task embedding ${e}_t$. Consequently, each task in continual learning is represented by a classifier function
$
%\phi_{\theta_t} = 
\phi(\cdot;\theta_t) = \phi\big( \cdot ; \hypnet({e}_t ; \hypwei)\big). 
$ 

After training, a single meta-model is produced, which furnishes weights for specific tasks. The capacity to generate distinct weights for each task allows hypernetwork-based models to exhibit minimal catastrophic forgetting. When learning the following task, essentially a new architecture is created, with weights dedicated to this task. To identify weights fitting to all tasks, we introduce \our{}, which uses a hypernetwork with interval arithmetic and, optionally, also training rules ensuring a joint weight subregion for more CL tasks. Basically, this scenario corresponds to a single architecture for a higher number of tasks. After training, we propagate the intersection of intervals through the hypernetwork to produce universal weights for all tasks. In such a scenario, we do not need to memorize embeddings and hypernetworks since one network is dedicated to all tasks.  
% \PawelWSide{Prośba o rozwinięcie tego ostatniego zdania, bo nie jest jasne, jaki to scenariusz i o jaką jedną architekturę chodzi.}

\paragraph{Interval arithmetic in the embedding space.}

In HNET~\cite{von2019continual} and HNET-based models \cite{henning2021posterior,ksikazek2023hypermask} authors use one-dimensional trainable embeddings ${e}_t \in \mathbb{R}^{M}$ for each task $t$, where $t \in \lbrace 1, ..., T \rbrace$. In \our{}, we use interval arithmetic, which results in an embedding defined by its lower and upper bound for each coordinate:
% \PatrykMain{Tak, brakuje plusów. Ponadto trzeba zdefiniować, czym jest $\epsilon_t$ i oczywiście napisać, że taki wektor musi mieć nieujemne współrzędne. Zamiast $e_t^i$ pisałbym $e_t^{\left(i\right)}$, żeby ktoś nie pomyślał, że podnosimy współrzędne do kolejnych potęg :)}
$
[\barbelow e_t, \bar e_t] = [e_t - \e_{t,i}, e_t + \e_{t,i}] =  [e^{(1)}_t - \e^{(1)}_{t, i}, e^{(1)}_t + \e^{(1)}_{t, i}] \times \ldots \times  [e^{(M)}_t - \e^{(M)}_{t, i}, e^{(M)}_t + \e^{(M)}_{t, i}] \subset \mathbb{R}^M,
$
where $\e_{t} = [\e_{t, i}^{(1)}, ..., \e_{t, i}^{(M)}]$ is a perturbation vector during the $i$--th iteration of training the $t$--th task ($i \in \lbrace 1, ..., n \rbrace$) satisfying the condition
$
\sigma\left(\epsilon_{t}\right) = 1,
$ 
where $\sigma\left(\cdot\right)$ is the softmax function. It is worth noting that such a normalization technique is applied to perturbated vectors before passing through the hypernetwork. Consequently, $e_t$ denotes the center of the embedding for the $t$--th task. The presented condition ensures that intervals do not collapse to zero widths in training. Perturbation vector coordinates values are trainable when we create specific weights for each task or are strictly given in cases where we define a joint weight subregion for all tasks. Nevertheless, its values must be non-negative, i.e., $\e_{t, i}^{j} \geq 0$ for $j \in \lbrace 1, ..., M \rbrace$.

Thanks to using interval-based embeddings, we can select an embedding subspace to create regions dedicated to more CL tasks, see Fig.~\ref{fig:merf_appro}. In \our{}, interval embeddings are transformed by the hypernetwork into hyperrectangles of weights of the target model. In this scenario, the hypernetwork propagates segments instead of points. 
To achieve this, in the hypernetwork model, we use an architecture based on Interval Bound Propagation (IBP)~\cite{gowal2018effectiveness}.

% IBP \cite{gowal2018effectiveness} was proposed to construct provably robust classifiers. \KamilSide{Ten akapit musi przejść do sekcji powyżej, ale na razie go nie przenoszę, bo trzeba znaleźć mu miejsce.} It employs interval arithmetic to transmit an axis-aligned bounding box across layers, aiming to reduce the upper bound of the difference between any two logits when the input undergoes perturbation within a bounded-normed sphere.\PawelWSide{No faktycznie, nie bardzo to tu pasuje}

In our proposed \our{}, we use a hypernetwork whose weights $\hypwei$ are vectors, but we propagate an interval input $[\barbelow e_t, \bar e_t]$ producing an interval output, i.e., 
$$
    [\barbelow \theta_t, \bar \theta_t] = \hypnet([\barbelow e_t, \bar e_t]; \hypwei). 
$$
The inner workings of the hypernetwork are based on interval arithmetic. We elaborate more on this in Appendix \ref{sec:hypernet-with-em-in}. 

The hypernetwork $\hypnet$ is trained using the procedure described below. It is necessary to ensure that in learning a given task, \our{} does not forget the previously learned knowledge. 
Outputs from task-conditioned hypernetworks are generated based on the task embedding.
To prevent the hypernetwork from forgetting previous tasks, we regularize its training to produce the same target network weights for previous task embeddings. In a training of task $T$ with \our{}, the regularization loss is specified as:
%We can compare the hypernetwork output, generated before the learning of task $T$ with weights $[\munderbar{\theta^{*}}, \bar{\theta^{*}}]$, against the output following the proposed changes in the hypernetwork weights $\Delta \eta$. Lastly, in \our{}, the output regularization loss is specified as:
% \begin{align*}
% \mathcal{L}_{output} (\hypwei^*, \hypwei, \Delta \hypwei, \{ [\barbelow e_t, \bar e_t]  \}_{t=1\dots T-1}  )
% =  \frac{1}{3\left(T-1\right)} & \sum_{t=1}^{T-1} \left( \left\|  \hypnet(\barbelow e_t; \hypwei^{*}) - \hypnet(\barbelow e_t; \hypwei + \Delta \hypwei) \right\|^2 \right. \\
% \quad + \left\| \hypnet\left(e_t; \hypwei^{*}\right) - \hypnet\left( e_t; \hypwei + \Delta \hypwei \right) \right\|^2 &
% \quad + \left. \left\|  \hypnet(\bar e_t; \hypwei^{*}) - \hypnet(\bar e_t; \hypwei + \Delta \hypwei) \right\|^2 \right) .
% \end{align*}
%$$
%\mathcal{L}_{output}(\eta)
%=  \frac{1}{3\left(T-1\right)} \sum_{t=1}^{T-1} \left( \left\|  \hypnet(\barbelow e_t; \hypwei_{T-1}) - \hypnet(\barbelow e_t; \hypwei) \right\|^2 \right. 
%+ \left\| \hypnet\left(e_t; \hypwei_{T-1}\right) - \hypnet\left( e_t; \hypwei\right) \right\|^2 
%+ \left. \left\|  \hypnet(\bar e_t; \hypwei_{T-1}) - \hypnet(\bar e_t; \hypwei) \right\|^2 \right) .
%$$
%\begin{equation}
$$
\mathcal{L}_{output}(\eta)
=  \frac{1}{3\left(T-1\right)}  \sum_{t=1}^{T-1} 
\!\!\!\!\!\!\!\!\!\!\!\!\!\!\!\!\!\!\!\!\!\!\!\!\!\!
\sum_{\qquad\qquad\mu \in \{\munderbar e_t, \frac{\munderbar e_t + \bar e_t}{2}, \bar e_t\}} 
\!\!\!\!\!\!\!\!\!\!\!\!\!\!\!\!\!\!\!\!\!\!\!\!\!\!
\left\|  \hypnet(\mu; \hypwei_{T-1}) - \hypnet(\mu; \hypwei) \right\|^2,  
$$
%\end{equation}
where $\hypwei_{T-1}$ are the hypernetwork weights trained for task $T-1$. 
The second component of the nested sum above corresponds to the regularization of the embedding center, i.e. $e_t = (\munderbar{e}_t + \bar{e}_t)/2$. The motivation for using this regularization formula is its ability to effectively preserve the knowledge acquired from previous tasks by controlling the interval lengths produced by the hypernetwork. Specifically, when the product of the hypernetwork's weights is small, the resulting intervals tend to be short. This outcome is a direct consequence of the Lipschitz continuity of MLP networks, which we use exclusively as hypernetworks. This made additional regularization redundant. For further validation, please refer to Appendix \ref{appendix:interval_lengths_plots}.

%Although small weights can typically be achieved through $L^2$ or $L^1$ norm regularization, we found that in our case, this was unnecessary. The intervals generated by the hypernetwork were sufficiently small even without these regularizations. For further validation, please refer to Appendix \ref{appendix:interval_lengths_plots}.

% \PawelW{To zajmuje tutaj dużo miejsca, a jest to generalny wynik oznaczający, że funkcja reprezentowana przez MLP jest Lipschitz-ciągła ze stałą proporcjonalną do iloczynu norm wag warstw. Poza tym, jest tu powołanie się na (4), które jest dopiero później. Ja bym to zrobił inaczej: przerzucił Proposition do appendixa D, przemianował go na coś w rodzaju "Lipschitz-continuity of MLP", a tu napisał, że korzystamy tutaj z Lipshitz-ciągłości hypernetworka w taki to a taki sposób, a dowodzimy jej w appendiksie.  }
% \PawelW{To chyba też łatwo da się uogólnić do dowolnych funkcji aktywacji, wstawiając w iloczyn maksymalną pochodną funkcji aktywacji. }
The proposition and the proof of the Lipschitz continuity of MLP networks is detailed in Appendix \ref{sec:lipschitz_proof}. Consequently, regularizing the endpoints of the intervals alone is sufficient to maintain the knowledge from previous tasks. This conclusion is further supported by the fact that the proposed regularization imposes a non-increasing constraint on the interval. As a result, it is unnecessary to consider other points within the interval $\left[\barbelow{e}_t, \bar{e}_t\right]$ for regularization purposes. Nonetheless, we also apply additional regularization to the center of the interval, as this approach has been observed to yield slightly better results.
When the task identity is given in the inference stage, then one can just use 
% $$
% \mathcal{L}_{output} (\hypwei^*, \hypwei, \Delta \hypwei, \{ [\barbelow e_t, \bar e_t]  \}_{t=1\dots T-1}  )
% =  \frac{1}{T-1} \sum_{t=1}^{T-1}\left\| \hypnet\left(e_t; \hypwei^{*}\right) - \hypnet\left( e_t; \hypwei + \Delta \hypwei \right) \right\|^2 ,
% $$
$$
\mathcal{L}_{output} (\hypwei)
=  \frac{1}{T-1} \sum_{t=1}^{T-1}\left\| \hypnet\left(e_t; \hypwei_{T-1}\right) - \hypnet\left( e_t; \hypwei\right) \right\|^2 ,
$$
as we do not have to regularize the hypernetwork beyond the middle of the interval. The ultimate cost function is a sum of a component $\mathcal{L}_{current}$, defined by the current task data, and output regularization $\mathcal{L}_{output}$, i.e.,  
\begin{equation} \label{main:objective}
\mathcal{L} = \mathcal{L}_{current} + \beta \cdot \mathcal{L}_{output}, 
\end{equation}
where $\beta$ represents a hyperparameter managing the intensity of regularization. For an input-output data pair, $(x,y)$, the current loss is defined as the standard cross-entropy combined with the worst-case cross-entropy, i.e., 
$$
\loss_{current} = \kappa\cdot \loss\left(\frac{\barbelow z_L + \bar z_L }{2}, y \right) + \left(1-\kappa\right) \cdot \loss\left( \hat z_L, y \right),
$$
where $\kappa$ is a hyperparameter scheduled during training helping to control the proper classification of samples. $\mathcal{L}_{current}$ in \eqref{main:objective} is the average of $\loss_{current}$ over the current task data $D_t$. 

Both components of $\mathcal{L}$ are essential because \our{} includes two networks, and it is imperative to mitigate drastic changes in the hypernetwork output weights after learning of subsequent CL tasks while being able to gain new knowledge.
The pseudocode of \our~is presented in Appendix~\ref{sec:algorithm}.

\paragraph{Hypernetwork as a meta-trainer}

\our{} consists of interval embeddings propagated through the IBP-based hypernetwork and the target network. % Thus, the hypernetwork can be seen as a meta-trainer. %During training, we simultaneously optimize embedding and hypernetwork.
During the training of CL tasks, we ensure that consecutive embedding intervals have a non-empty intersections with the previous ones. A common part of these embeddings may be used as a universal embedding and applied for solving all CL tasks, see Fig.~\ref{fig:interval_embeddings_cifar100}. Therefore, we can propagate it through the IBP-based hypernetwork, getting a single target network to classify samples from all tasks. In such a way, the hypernetwork does not have to be used in inference and thus is considered as a \emph{meta-trainer}. Finally, one set of weights can be utilized without storing the previous ones.

In order to achieve embedding hyperrectangles overlapping for different tasks, instead of training them directly, we generated them from trained {\it pre-embeddings}, $\preem_t \in \mathbb{R}^M$, with the following formulae 
\begin{equation} \label{preem2em}
    \begin{split}
    e_t & = (\gamma/M) \cos(\preem_t), \\ 
    \barbelow e_t & = e_t - \gamma\cdot\sigma(\epsilon_t),  \\ 
    \bar e_t & = e_t + \gamma\cdot\sigma(\epsilon_t),     
    \end{split}
\end{equation}
where $\gamma$ is a perturbation hyperparameter and $M$ is a natural number representing the embedding space dimensionality, and $\sigma(\cdot)$ is the softmax function. %\kamil{Mam wrażenie, że ten akapit jest trochę tajemniczy. Nagle pojawiają się pre-embeddingi i nie do końca wiadomo, czemu one mają służyć. Może warto, żebyśmy opisali krótko ich sens?} \PawelW{Czy teraz jest lepiej?}
%We achieve nested embeddings by transforming embedding centers $e_t$ using $\frac{\gamma}{M} \cdot \cos(e_t)$ derived in Lemma~\ref{lemma:intersection}, where $\gamma$ is a perturbation hyperparameter and $M$ is a natural number representing the embedding space dimensionality. 

At the end of the training, we have a sequence $([\barbelow e_1, \bar e_1], ..., [\barbelow e_T, \bar e_T])$ of interval embeddings dedicated to consecutive CL tasks. 
The above procedure forces such segments to have non-empty intersections as it is shown in Lemma \ref{lemma:intersection}. Therefore, we can define a universal embedding as 
$[\barbelow e, \bar e] = \bigcap_t \ \left[\barbelow e_t, \bar e_t\right]$.
As mentioned above, when $\epsilon^*$ is trainable, finding a universal embedding is not guaranteed.
\begin{lemma}\label{lemma:intersection}
    Let $(e_1, e_2, \ldots, e_T)$ be embedding centers, $T$ be the number of CL tasks, $\gamma > 0$ be a perturbation value, %$\epsilon^*$ be a vector of ones, 
    $\hypnet(\cdot; \hypwei)$ be a hypernetwork with weights $\hypwei$, and $M$ be a natural number representing the dimensionality of the embedding space, $\barbelow e_t$, $\bar e_t$ be calculated according to \eqref{preem2em} with $\epsilon_t \equiv \epsilon^*$ being a vector of ones, where $t \in \{ 1, 2, \ldots, T \}$. 
    Then
    $$
    [\barbelow{e}, \bar{e}] = \bigcap_{t=1}^T [\barbelow{e}_t, \bar{e}_t]
    $$
    has a non-empty intersection.
\end{lemma}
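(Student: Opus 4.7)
The plan is to exhibit an explicit point that lies in every $[\barbelow e_t, \bar e_t]$, namely the origin. The argument factors cleanly over coordinates, so I would reduce everything to a one-dimensional statement.

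First, I would simplify the perturbation term. Since $\epsilon^* = (1,1,\ldots,1) \in \mathbb{R}^M$, the softmax $\sigma(\epsilon^*)$ is the uniform distribution $(1/M,\ldots,1/M)$. Consequently $\gamma \cdot \sigma(\epsilon^*) = (\gamma/M,\ldots,\gamma/M)$, so for every task $t$ and every coordinate $j \in \{1,\ldots,M\}$,
\begin{equation*}
[\barbelow e_t^{(j)}, \bar e_t^{(j)}] \;=\; \bigl[\, e_t^{(j)} - \gamma/M,\; e_t^{(j)} + \gamma/M \,\bigr].
\end{equation*}

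Next, I would use the definition $e_t^{(j)} = (\gamma/M)\cos(a_t^{(j)})$ together with the bound $\cos(a_t^{(j)}) \in [-1,1]$ to conclude that $e_t^{(j)} \in [-\gamma/M,\, \gamma/M]$ for every $t,j$. Combined with the previous display, this immediately gives $\barbelow e_t^{(j)} \le 0 \le \bar e_t^{(j)}$ for all $t$ and $j$. Hence the zero vector $0 \in \mathbb{R}^M$ belongs to every hyperrectangle $[\barbelow e_t, \bar e_t]$, so the intersection $\bigcap_{t=1}^T [\barbelow e_t, \bar e_t]$ contains at least $0$ and is therefore non-empty.

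There is no real obstacle here: the whole content of the lemma is the observation that normalizing the radius vector by softmax of the all-ones vector yields exactly $\gamma/M$, which matches the amplitude of the cosine-based embedding centers. The only thing I would be careful about is to state up front that $\epsilon^*$ being a vector of ones is essential: if $\epsilon^*$ were trainable, the softmax could concentrate mass on a single coordinate and the radius $\sigma(\epsilon^*)^{(j)}$ could fall below $|\cos(a_t^{(j)})|/M$, destroying the inclusion $0 \in [\barbelow e_t^{(j)}, \bar e_t^{(j)}]$. This matches the remark in the main text that universality is not guaranteed in the trainable-$\epsilon$ case.
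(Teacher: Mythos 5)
Your proof is correct, and it takes a slightly different (and arguably cleaner) route than the paper's. The paper argues pairwise: it fixes two tasks $t_1,t_2$ and a coordinate $i$, observes that the worst-case distance between the transformed centers is $\frac{2\gamma}{M}$, which exactly equals the sum of the two radii $2\gamma\,\sigma(\epsilon^*)^{(i)}$, and concludes that any two intervals meet (in the worst case in a single point); the step from pairwise to global non-emptiness is left implicit, though it is justified because intervals on the real line satisfy the one-dimensional Helly property. You instead exhibit an explicit common point, the origin, by noting that $\sigma(\epsilon^*)=(1/M,\dots,1/M)$ gives radius $\gamma/M$ in every coordinate while $|e_t^{(j)}|=(\gamma/M)|\cos(a_t^{(j)})|\le\gamma/M$, so $\barbelow e_t^{(j)}\le 0\le\bar e_t^{(j)}$ for all $t,j$. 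This buys you a completely self-contained argument with no appeal (even implicit) to pairwise-implies-global reasoning, and it identifies a concrete universal embedding candidate; the paper's version, in exchange, makes visible that the construction is tight, i.e.\ the intersection can degenerate to a single point when two centers attain $\pm\gamma/M$ in the same coordinate. Your closing remark about the trainable-$\epsilon$ case correctly explains why the all-ones assumption is essential.
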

The proof of Lemma \ref{lemma:intersection} is presented in Appendix~\ref{sec:proof}. Elements from the intersection allow one to solve multiple tasks simultaneously. 
% \kamil{Trudne zdanie i dalej uważam, że może być nieco mylące. W dodatku co znaczy z "some accuracy"?}
% \KamilSide{Czy doprecyzowujemy, że chodzi o zachowanie na zbiorach treningowych/walidacyjnych, a nie testowych? ODP.: Tak, dodamy to do limitations} 
Then, we can use the center of the universal embedding and the trained hypernetwork to produce a single target network. In the evaluation, it is sufficient to use such prepared target weights, and we no longer need to store the hypernetwork and trained interval embeddings. 

% In such a training procedure we use Hypernetwork as the Meta-Trainer which is used in training, but is not used in the evaluation procedure.
% Below we specify the conditions of our proposed architecture to be non-forgetting. Under the additional condition of $\epsilon_t$ in \eqref{preem2em} being fixed, there is always a non-empty intersection of embedding intervals for different tasks that assure an earlier trained performance of the target network in these tasks.
\paragraph{Guarantees of non-forgetting} 
Below, we specify the conditions of our proposed architecture to be non-forgetting. When the intersection of embedding intervals is non-empty, and the regularization for the hypernetwork training is effective, then we achieve non-forgetting, as the theorem below specifies.  %we can formulate Theorem \ref{theorem:non_forgetting}.

\begin{theorem}\label{theorem:non_forgetting}
    Let $\left( e_1,\ldots, e_T \right)$ be embedding centers with corresponding perturbation vectors $\left( \epsilon_1, \ldots, \epsilon_T \right)$, $T$ be the number of CL tasks, $D_t = \left(X_t, Y_t\right)$ be a pair of observations $X_t$ and their corresponding one-hot encoded classes $Y_t$ taken from the $t$-th task, $\hypnet(\cdot; \hypwei_T)$ be a hypernetwork with weights $\hypwei_T$ obtained at the end of training the $T$-th task. Let also $\phi\left( \cdot; \hypnet\left( [\barbelow e_t, \bar e_t]; \eta_T \right) \right)$ be a target network with interval weights produced by the hypernetwork such that for every $\epsilon > 0$, $t\in \lbrace 1, 2, \ldots, T\rbrace$, $e_t$, $\epsilon_t$, and $x\in X_t$, there exists $y \in Y_t$ such that
    $$
    \sup_{\mu \in \left[ \munderbar{e}_t, \bar{e}_t \right]}\norm{}{y - \phi\left(x, \hypnet\left(\mu; \hypwei_T\right)\right)}_2 \leq \epsilon.
    $$

    Assume also that 
    $
    \bigcap_{t=1}^T\left[\munderbar{e}_t, \bar{e}_t \right]
    $ 
    is non-empty and let us introduce
    \begin{align*}
    A_t = \lbrace \mu | & \forall_{\epsilon > 0}\forall_{x \in X_t}\exists_{y \in Y_t} 
    \\& 
    \sup_{\mu \in \left[ \munderbar{e}_t, \bar{e}_t \right]}\norm{}{y - \phi\left(x, \hypnet\left(\mu; \hypwei_T\right)\right)}_2 \leq \epsilon\rbrace, \\ 
    & t\in\lbrace 1, 2, \ldots, T\rbrace, 
    \end{align*}
    \begin{align*}
    A = \lbrace \mu | & \forall_{\epsilon > 0}\forall_{x \in X_t}\exists_{y \in Y_t} 
    \\& 
    \sup_{\mu \in \bigcap_{t=1}^T\left[\munderbar{e}_t, \bar{e}_t \right]}\norm{}{y - \phi\left(x, \hypnet\left(\mu; \hypwei_T\right)\right)}_2 \leq \epsilon\rbrace.
    \end{align*}
    
    Then, we have guarantees of non-forgetting within the region $[\barbelow{e}, \bar{e}] = \bigcap_{t=1}^T [\barbelow{e}_t, \bar{e}_t]$, i.e. $A \subset A_t$, for each $t\in \lbrace 1, 2, \ldots, T\rbrace$.
\end{theorem}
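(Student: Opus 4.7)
The plan is to exploit two elementary ingredients: (i) the set-theoretic inclusion $\bigcap_{s=1}^{T}[\barbelow e_s,\bar e_s]\subseteq[\barbelow e_t,\bar e_t]$ for every fixed $t$, and (ii) the monotonicity of the supremum under this inclusion. These together will show that the defining condition for membership in $A$ is at least as strong as the defining condition for membership in $A_t$, so the desired inclusion follows almost mechanically. I would not need to touch the internal workings of either the hypernetwork or the interval target network, because the hypothesis of the theorem already packages all the needed analytic content about the map $\mu\mapsto\phi(x,\hypnet(\mu;\hypwei_T))$.

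First, I would fix $t\in\{1,\dots,T\}$ and take an arbitrary $\mu\in A$. For an arbitrary $\varepsilon>0$ and $x\in X_t$, the theorem's assumption produces a $y\in Y_t$ satisfying
$$
\sup_{\mu'\in[\barbelow e_t,\bar e_t]}\bigl\|y-\phi\bigl(x,\hypnet(\mu';\hypwei_T)\bigr)\bigr\|_2\leq\varepsilon.
$$
This is exactly the condition that places $\mu$ into $A_t$; since $t$ was arbitrary, this yields $A\subseteq A_t$ for all $t\in\{1,\dots,T\}$, which is the conclusion. Symmetrically, I would record that any witness $y$ verifying the bound over $[\barbelow e_t,\bar e_t]$ automatically verifies the analogous bound over the smaller set $\bigcap_{s=1}^{T}[\barbelow e_s,\bar e_s]$ by monotonicity of $\sup$; combined with Lemma~\ref{lemma:intersection}, which guarantees the intersection is non-empty, this shows $A$ itself is non-vacuous and the statement is not trivially void.

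The main subtlety I anticipate is notational rather than mathematical: the outer $\mu$ in the set-builder definitions of $A_t$ and $A$ does not appear inside the supremum, so strictly speaking both sets are all-or-nothing relative to the ambient embedding space. I would address this by making explicit (in one sentence) that the theorem should be read as asserting that the non-forgetting property established for each individual task hyperrectangle propagates to the intersection, and then deriving the stated inclusion from the monotonicity argument above. With that clarification, the remainder of the proof is a direct unpacking of the definitions and no further delicate estimates are needed.
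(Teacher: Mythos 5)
Your proof is correct, but it takes a genuinely different --- and considerably more direct --- route than the paper's. You observe that the defining predicate of $A_t$ contains no free occurrence of the outer variable $\mu$ (it is shadowed by the bound variable inside the supremum), and that this predicate is verbatim the hypothesis of the theorem; hence $A_t$ is the entire ambient space and $A\subseteq A_t$ holds for essentially trivial reasons, with the monotonicity of the supremum under $\bigcap_{s=1}^T[\barbelow e_s,\bar e_s]\subseteq[\barbelow e_t,\bar e_t]$ needed only to confirm that $A$ is likewise all of the space rather than vacuous. The paper instead argues through the chain $A=\{\ldots\min_t\sup_t\ldots\}\subseteq\{\ldots\forall_t\sup_t\ldots\}=\bigcap_t A_t\subseteq A_{t^*}$, invoking compactness of the $A_t$ and the identity $\sup\left(\bigcap_t A_t\right)=\min_t\sup\left(A_t\right)$; that identity holds only as an inequality ($\leq$) in general, and the displayed inclusion from the ``$\min_t\sup$'' set into the ``$\forall_t\sup$'' set is reversed as a logical implication --- it goes through only because, under the hypothesis, every set in sight is degenerate (all-or-nothing), which is exactly the point you make explicit. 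What your approach buys is a clean argument that avoids both questionable steps; what it makes plain (and the paper's phrasing obscures) is that the theorem as formalized is nearly tautological. One small caveat: your opening claim that the membership condition for $A$ is ``at least as strong'' as that for $A_t$ has the direction backwards --- the supremum over the smaller intersection gives the \emph{weaker} bound, so a priori one gets $A_t\subseteq A$, not the converse. This is not load-bearing, since your actual derivation obtains $\mu\in A_t$ directly from the theorem's hypothesis rather than from $\mu\in A$, but the sentence should be corrected.
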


The proof of the above theorem is  in Appendix~\ref{sec:proof}. 

% Such a formulation allows us to get guarantees for each $e \in [\barbelow e_t, \bar e_t]$

\begin{figure}
    \centering

        \includegraphics[width=0.49\textwidth]{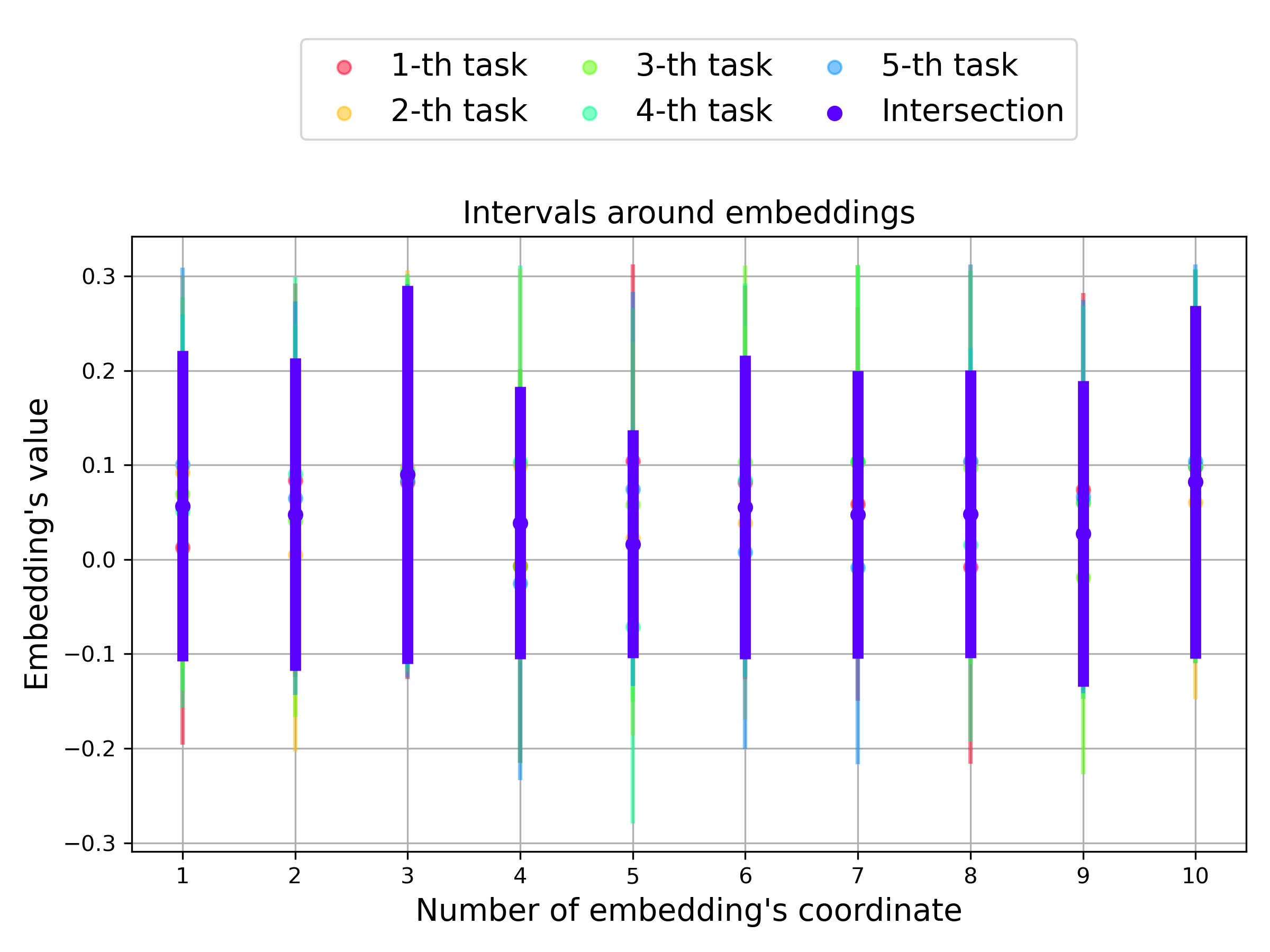}
        \caption{Embedding intervals for Split CIFAR-100, 5~tasks with 20 classes each, using the $\cos\left(\cdot\right)$ nesting method. The ten first embedding coordinates are shown.}\label{fig:interval_embeddings_cifar100}
\end{figure}

% \begin{wraptable}{r}{5.5cm}
\begin{table}[!h]
% \begin{wraptable}{l}{8.cm}
% \small
\caption{Average accuracy with a standard deviation of different continual learning methods in the TIL setup. Results for different methods than \our{} are derived from other papers.}
\centering
{
\fontsize{9pt}{9pt}\selectfont
\begin{tabular}{@{}l@{\;}c@{}c@{}c@{\;}c@{}}
    \toprule
    % \textbf{Method} & \textbf{Per. MNIST} & \textbf{Split MNIST} & \textbf{Split CIFAR-100} & \textbf{TinyImageNet} \\

    \textbf{Method} & \textbf{Permuted } & \textbf{Split} & \textbf{Split} & \textbf{Tiny}\\
    & \textbf{MNIST} & \textbf{MNIST} & \textbf{CIFAR-100} & \textbf{ ImageNet}\\     
    
    \midrule
    HAT & $97.67 \spm 0.02$ & $-$ & $72.06 \spm 0.50$ & $-$ \\
    GPM & $94.96 \spm 0.07$ & $-$ & $73.18 \spm 0.52$ & $67.39 \spm 0.47$ \\
    PackNet & $96.37 \spm 0.04$ & $-$ & $72.39 \spm 0.37$ & $55.46 \spm 1.22$ \\
    SupSup & $96.31 \spm 0.09$ & $-$ & $75.47 \spm 0.30$ & $59.60 \spm 1.05$ \\
    La-MaML & $-$ & $-$ & $71.37 \spm 0.67$ & $66.99 \spm 1.65$ \\
    FS-DGPM & $-$ & $-$ & $74.33 \spm 0.31$ & $70.41 \spm 1.30$ \\
    WSN, $30\%$ & $96.41 \spm 0.07$ & $-$ & $75.98 \spm 0.68$ & $\mathbf{70.92 \spm 1.37}$ \\
    WSN, $50\%$ & $96.24 \spm 0.11$ & $-$ & $76.38 \spm 0.34$ & $69.06 \spm 0.82$ \\
    \midrule
    EWC & $95.96 \spm 0.06$ & $99.12 \spm 0.11$ & $72.77 \spm 0.45$ & $-$ \\
    SI & $94.75 \spm 0.14$ & $99.09 \spm 0.15$ & $-$ & $-$ \\
    DGR & $97.51 \spm 0.01$ & $99.61 \spm 0.02$ & $-$ & $-$ \\
    HNET & $97.57 \spm 0.02$ & $\mathbf{99.79 \spm 0.01}$ & $-$ & $-$ \\
    \midrule
    \our{} & $\mathbf{97.78 \spm 0.09}$ & $ 99.75 \spm 0.08$ & $\mathbf{77.46 \spm 1.34}$ & $66.10 \spm 0.62$ \\
    \bottomrule
\end{tabular}
}
\label{tab:continual}
\end{table}
% \end{wraptable}

% \begin{wraptable}{l}{7.5cm}
\begin{table}[t]
    \centering
    \caption{CL in the CIL setup. The task identity results for entropy approach
    Last -- last task accuracy (max), Avg. -- average task accuracy (max). The best results are indicated with {\bf bold}. If standard deviations are provided, the results represent the average of 5 runs.
    }
    \fontsize{9pt}{9pt}\selectfont
    \setlength{\tabcolsep}{4pt} % Adjust column separation
    \begin{tabular}{@{}c@{\hskip 0.1in}c@{\hskip 0.1in}c@{}cl@{}}
        \toprule
        \textbf{Method} & \textbf{Permuted} & \textbf{Split} & \multicolumn{2}{c}{\textbf{Split CIFAR-100}} \\
        \cmidrule(lr){4-5}
         & \raisebox{1ex}[1ex]{\textbf{MNIST}} & \raisebox{1ex}[1ex]{\textbf{MNIST}} & \textbf{\quad\;Last\;\quad} & \textbf{Avg.} \\
        \midrule
        HNET+ENT & $91.75 \spm 0.21$ & $69.48 \spm 0.80$ & - & - \\
        EWC & $33.88 \spm 0.49$ & $19.96 \spm 0.07$ & - & - \\
        SI & $29.31 \spm 0.62$ & $19.99 \spm 0.06$ & - & - \\
        DGR & $\mathbf{96.38 \spm 0.03}$ & $\mathbf{91.79 \spm 0.32}$ & - & - \\
        \midrule
        Euclidean-NCM & - & - & $30.6$ & $50.0$ \\
        FeTrIL & - & - & $46.2$ & $61.3$  \\
        FeCAM & - & - & $48.1$ & $62.3$  \\
        DS-AL & - & - & - & $\mathbf{68.39 \spm 0.16}$ \\
        EFC & - & - & - & $65.97 \spm 1.19$ \\
        \midrule
        \our{} & $94.60 \spm 1.11$ & $55.38 \spm 4.89$ & $44.58$ & $43.45 \spm 0.97$ \\
        \bottomrule
     \end{tabular}
    \label{tab:CL3}
\end{table}
% \end{wraptable}

\begin{table}[!h]
\caption{Average test accuracy with a standard deviation of the InterContiNet and \our{} methods for TIL, DIL, and CIL scenarios. Results for the DIL and CIL scenario using \our{} are calculated with the universal embedding and entropy method, respectively. Results for InterContiNet are derived from \cite{wolczyk2022continual}. The standard deviations for Split MNIST and Split CIFAR-10 are calculated over 5 runs. The standard deviation for Split CIFAR-100 in TIL setup is calculated over 3 runs. The best results are indicated with {\bf bold}.}
\centering
{
\scriptsize
\begin{tabular}{@{}ll@{\;}c@{\hskip 0.1in}c@{\hskip 0.1in}c@{}}
    \toprule
     & \textbf{Method} & \textbf{Split MNIST} & \textbf{Split CIFAR-10} & \textbf{Split CIFAR-100} \\
    \midrule
    \multirow{2}{*}{TIL} & InterContiNet & $98.93 \spm 0.05$ & $ 72.64 \spm 1.18 $ & $42.0 \spm 0.2$  \\
    & \our{} & $\mathbf{99.75 \spm 0.08}$ & $ \mathbf{90.91 \spm 0.95} $ & $\mathbf{79.23 \spm 0.36}$ \\
    \midrule
    \midrule
    \multirow{2}{*}{CIL} & InterContiNet & $40.73 \spm 3.26$ & $ 19.07 \spm 0.15 $ & $-$  \\
    & \our{}  & $\mathbf{55.37 \spm 4.24}$ & $\mathbf{24.19 \pm 1.12} $ & $-$ \\
    \midrule
    \midrule
    \multirow{2}{*}{DIL} & InterContiNet & $77.77 \spm 1.24$ & $ 69.48 \spm 1.36 $ & $-$  \\
    & \our{}  & $\mathbf{79.29 \spm 3.97}$ & $ \mathbf{74.01 \spm 1.36} $ & $14.22$ \\
    \bottomrule
\end{tabular}
}
% \vspace{-0.3cm}
\label{tab:intercontinet_comparison}
% \end{wraptable}
\end{table}

%%%%%%%%%%%%%%%%%%%%%%%%%%%%%%%%%%%%%%%%
\section{Experiments}
%%%%%%%%%%%%%%%%%%%%%%%%%%%%%%%%%%%%%%%%

In this section, we provide an overview of the results of our method under different training assumptions. We cover a variety of incremental learning setups to ensure a broad analysis of the interval arithmetic approach in CL. Moreover, we show the best results obtained on each dataset using our training method.

% \begin{wraptable}{l}{8cm}
\begin{table}[!h]
\caption{Average test accuracy with a standard deviation of the InterContiNet and \our{} methods, when the task identity is known or unknown, respectively. Results for InterContiNet are derived from \cite{wolczyk2022continual}. The standard deviations for Split MNIST and Split CIFAR-10 are calculated over 5 seeds.The standard deviation for CIFAR-100 is calculated over 3 seeds.}
\centering
{
\scriptsize
\begin{tabular}{@{}ll@{\;}c@{\;}c@{\;}c@{}}
    \toprule
     & \textbf{Method} & \textbf{Split MNIST} & \textbf{Split CIFAR-10} & \textbf{Split CIFAR-100} \\
    \midrule
    \multirow{2}{*}{TIL} & InterContiNet & $98.93 | 0.05$ & $ 72.64 | 1.18 $ & $42 | 0.2$  \\
    & \our{} & $\mathbf{99.75 | 0.08}$ & $ \mathbf{90.91 | 0.95} $ & $\mathbf{79.23 | 0.36}$ \\
    \midrule
    \midrule
    \multirow{2}{*}{CIL} & InterContiNet & $40.73 | 3.26$ & $ 19.07 | 0.15 $ & $-$  \\
    & \our{}  & $\mathbf{78.8 | 5.39}$ & $ \mathbf{72.87 | 2.2} $ & $-$ \\
    \bottomrule
\end{tabular}
}
% \vspace{-0.3cm}
\label{tab:intercontinet_comparison}
% \end{wraptable}
\end{table}

\paragraph{Training setup}
We apply three typical CL training setups: TIL, in which the task identity of test samples is known, Domain Incremental Learning (DIL), and CIL. In the last two setups, the task identity during inference is unknown. 
When we consider \our{} in TIL, we do not use any nesting approaches, and the input embedding intervals can be of different lengths. In CIL we use entropy to determine task ID during the test phase, and in DIL, we use the $\cos(\cdot)$ nesting method for the embeddings, and the input intervals are of the same length, to prevent the embeddings from collapsing to a trivial point.
A~more detailed description of these setups can be found in Appendix \ref{sec:train-details}.

\paragraph{Datasets} We conduct experiments on 5 publicly available datasets: Permuted MNIST \cite{oshg2019hypercl}, Split MNIST \cite{oshg2019hypercl}, Split CIFAR-10 \cite{wolczyk2022continual}, Split CIFAR-100 \cite{goswami2024fecam} and TinyImageNet \cite{goswami2024fecam}. We~encourage the reader to proceed to the supplementary materials for the details about the task division.

\paragraph{Architectures and baselines}

As target networks for Permuted MNIST and Split MNIST, we use two-layered MLPs with 1000 neurons per layer for the first dataset and 400 neurons per layer for the second one. For Split CIFAR-100 with 5 and 10 tasks, each with equally distributed labels per task, and for TinyImageNet, we select a convolutional network, specifically the ResNet-18, as in \cite{goswami2024fecam}. To ensure a fair comparison with the InterContiNet method, we also train the AlexNet architecture for Split CIFAR-10 and Split CIFAR-100  (20 tasks, each with 5 labels). We modify the AlexNet architecture according to \cite{wolczyk2022continual}. Specifically, we add batch normalization after each convolutional and fully connected layer. However, we do not include batch normalization after the final linear layer, as it serves as the classification layer. Due to limited GPU resources, the number of neurons in the first two fully connected layers is reduced by half, compared to the version of AlexNet used in \cite{wolczyk2022continual}. In all cases, the hypernetwork is an MLP with either one or two hidden layers. Whenever a convolutional target network is used, we apply interval relaxation during training. The description of this approach can be found in the Appendix \ref{sec:relax_tech}.
We compare our solution with InterContiNet~\cite{wolczyk2022continual}, WSN~\cite{kang2023forgetfree}, HNET~\cite{von2019continual}, FeCAM~\cite{goswami2024fecam}, as well as some other strong CL approaches mentioned in \cite{goswami2024fecam}. The choice of methods used for comparison depends on the applied training setup.
% We~use different model parameters, depending on the choice of the training setup, as well as some training relaxation techniques, depending on the target network architecture.
% We~encourage the reader to proceed to the supplementary materials section for a~more detailed description.
% More details can be found in the supplementary materials.

% \paragraph{Baselines}
% .

\paragraph{Experimental results} Results for the known task identity setup are presented in Table \ref{tab:continual}. Our method outperforms its competitors on Permuted MNIST and Split CIFAR-100 datasets while reaching the second-best result on the Split MNIST dataset. Moreover, on TinyImageNet, we obtained stable results, meaning the second-lowest standard deviation score.
For the same training setup, the comparison between InterContiNet and \our{} is shown in Table \ref{tab:intercontinet_comparison}. We obtained better results on all three datasets used, specifically showing the advantage of \our{} in case of training convolutional networks. In this table, results for Split CIFAR-100 are obtained and compared using the InterContiNet setup: 20 tasks with 5 classes each. 

Results for the unknown task identity setup are presented in Tables \ref{tab:CL3} and \ref{tab:intercontinet_comparison}. Despite the large standard deviation on Split MNIST, we obtain one universal embedding which solves all tasks at once, as we do for Permuted MNIST. On Split CIFAR-100 we obtain the best maximum last task accuracy. Moreover, compared to other methods, \our{} achieves a smaller deviation between the last task and average task accuracy, showing consistency in consecutive task results. Unfortunately, the universal embedding found by \our{} performs poorly on the Split CIFAR-100 dataset, achieving only approximately $15\%$ accuracy. We argue that this is due to the larger number of classes per task, which makes it challenging to identify a single universal embedding capable of solving such tasks simultaneously. As in the previous setup, \our{} method outperforms InterContiNet on all datasets by a large margin, which is shown in Table \ref{tab:intercontinet_comparison}. In Figure \ref{fig:interval_embeddings_cifar100} we present the interval embeddings for each of 5 tasks of Split CIFAR-100. It is shown that a non-empty universal embedding exists, moreover, it does not collapse to a point in the embedding space.

In~Appendix~\ref{sec:ablation} we present a~study focused on interval lengths, interval nesting and regularization. In~Appendix~\ref{sec:additional-exp-results} we present more detailed insight of the above experimental results.

\section{Conclusions and limitations}

In this paper, we introduce \our{}, a continual learning architecture that employs interval arithmetic in the trained neural model and a hypernetwork producing its weights.  
\our{} uses interval embeddings for consecutive tasks and trains the hypernetwork to transform these embeddings into weights of the target network. The proposed mechanism allows us to train interval networks on large datasets in continual learning scenarios, in the TIL, CIL, and DIL setups. \our{} enables the generation of a universal embedding thanks to interval arithmetic and hypernetwork training. The intersection of intervals, i.e. a universal embedding, can solve all tasks simultaneously. In such a scenario, the hypernetwork functions solely as a {\em meta-trainer}, meaning that we maintain only a single set of weights generated by the hypernetwork through the universal embedding. This approach significantly reduces memory usage. Furthermore, we provide formal guarantees of non-forgetting.

\paragraph{Limitations}
% Our \our{} dramatically increases the performances of InterContiNet, but we have a problem with convolutional architectures. It is an open question of how to efficiently train Interval Arithmetic on convolutional layers.  
% There are architectural components for which there are no obvious interval counterparts. A~notable example of these components is two-dimensional batch normalization.
% Our approach has several limitations. 
Non-forgetting guarantees within the interval parameter space generated by the hypernetwork are valid only as long as the hypernetwork's regularization term remains effective. 
Second, we have observed that achieving satisfactory performance becomes challenging when the number of classes in a given task is large. Splitting such a task into subtasks may be advantageous and subsequently finding a universal embedding, but we leave this for future work.

\appendix

\section{Interval arithmetic in neural networks}
\label{sec:ia-in-nns}

Suppose $A,B \subset \R$ represent intervals. For all $\bar a, \barbelow a, \bar b, \barbelow b \in \R$, where $A = [\barbelow a, \bar a]$ and $B = [\barbelow b, \bar b]$, arithmetic operations can be defined as in \cite{lee2004first}:
\begin{itemize}
    \item addition:
    $
     [\barbelow a, \bar a] +  [\barbelow b, \bar b] = [\barbelow a + \barbelow b, \bar a + \bar b]
    $
    \item multiplication:
    $
     [\barbelow a, \bar a] *  [\barbelow b, \bar b] = [\min( \barbelow a * \barbelow b, \barbelow a * \bar b, \bar a * \barbelow b, \bar a * \bar b),
     \max( \barbelow a * \barbelow b, \barbelow a * \bar b, \bar a * \barbelow b, \bar a * \bar b )
     ]
    $      
\end{itemize}
Therefore, interval arithmetic is capable of executing affine transformations, enabling the implementation of both fully-connected and convolutional layers in neural networks.

We denote by $\phi(x;\theta)$ a layered neural classifier with input $x$ and weights $\theta$. It comprises a series of transformations 
\begin{align}
\phi\left(x;\theta\right) 
&= \left(h_L \circ h_{L-1} \circ \ldots h_1\right) \left(x; \theta\right) \\ 
&= h_L( h_{L-1}( \ldots h_1(x)  ) ).
\end{align}
All component transformations, $h_l$, are also based on the weights $\theta$, which we omit in the notation. The final output $\phi\left(x; \theta\right) = z_L \in \mathbb{R}^N$ is to indicate one of $N$ classes to which the input $x$ belongs.

In InterContiNet \cite{wolczyk2022continual}, weight matrices and bias vectors of each layer $h_l$ are located in hyperrectangles $[\barbelow W_l, \bar W_l]$ and $[\barbelow b_l, \bar b_l]$, respectively. We denote 
$[\barbelow \theta, \bar\theta] = \langle [\barbelow W_1, \bar W_1], [\barbelow b_1, \bar b_1], \dots, [\barbelow W_L, \bar W_L], [\barbelow b_L, \bar b_L]\rangle$ to represent intervals of all the trainable parameters in the network.

The transformation in $l$-th layer, $l=1,\dots,L$, of InterContiNet can be expressed as: 
\begin{equation}
[\barbelow z_l, \bar z_l] = h_l([\barbelow z_{l-1}, \bar z_{l-1}]) 
\end{equation}
where $z_0 = [x,x]$ is the input, $[\barbelow z_l, \bar z_l]$ is the hyperrectangle of $l$-th layer activations, 
\begin{align}
h_l([\barbelow z_{l-1},\bar z_{l-1}]) 
&= [ \barbelow h_l, \bar h_l]([\barbelow z_{l-1},\bar z_{l-1}]) \\ 
&= \psi([  \barbelow W_l , \bar W_l ] [\barbelow z_{l-1}, \bar z_{l-1}] + [\barbelow b_l, \bar b_l]),  
\end{align}
with $\psi$ being an activation function with positive output, and 
\begin{align}\label{z_l:min:max}
\underline{z}_{l} &= \min_{\underline{z}_{l-1} \leq z_{l-1} \leq \overline{z}_{l-1} }h_{l}(z_{l-1}), \\ \overline{z}_{l} &= \max_{\underline{z}_{l-1} \leq z_{l-1} \leq \overline{z}_{l-1}}h_{l}({z}_{l-1}).
\end{align}
Monotonicity of $\psi$ (e.g., ReLU, logistic sigmoid) is enough for efficient computation of $[\barbelow z_l, \bar z_l]$. % \eqref{z_l:min:max}. 

The input $z_0 = [x,x]$ is assumed to be nonnegative, $x\geq0$. The final output 
$
\phi(z_0;[\barbelow \theta_1, \bar \theta_L ]) = h_L( h_{L-1}( \ldots h_1(z_0)  ) ) = [\barbelow z_L, \bar z_L]
$ 
comprises $N$ intervals, one for each class.

\section{Layered hypernetwork with interval embedding input}
\label{sec:hypernet-with-em-in}

Let the hypernetwork $\hypnet([\barbelow{e_t}, \bar{e_t}]; \hypwei)$ comprise a series of transformations $[\barbelow h_1, \bar h_1](\cdot), \dots, [\barbelow h_L, \bar h_L](\cdot)$ across its $L$ layers. 
The final output $[\barbelow h_L, \bar h_L]\left([\barbelow h_{L-1}, \bar h_{L-1}]\left(\ldots [\barbelow h_1, \bar h_1]([x, x])\right)\right) = [\barbelow z_L, \bar z_L]$ produces interval weights for the target model. This result represents a composition of weight intervals of consecutive layers.

In \our{}, we propagate the interval embedding
$[\barbelow z_0, \bar z_0] = [\barbelow e, \bar e]$ from the input to the output layer. 
The output bounding box $[\barbelow{z}_{l}, \bar{z}_{l}]$ from the hypernetwork $\hypnet([\barbelow{e_t}, \bar{e_t}]; \hypwei)$ is calculated in the following way:
% By applying the transformation $[\barbelow h_k, \bar h_k]([\batbelow z_{k-1}, \bar z_{k-1}]) = \psi([\barbelow W_k, \bar W_k] [\barbelow z_{k-1} \bar z_{k-1}] + [\barbelow b_k, \bar b_k])$ 
% to the interval $[\underline{z}_{k-1}, \overline{z}_{k-1}]$
$$
\begin{array}{lll}
    & \mu_{l-1} = \frac{\overline{z}_{l-1} + \underline{z}_{l-1}}{2},
    & r_{l-1} = \frac{\overline{z}_{l-1} - \underline{z}_{l-1}}{2},\\[0.8ex]
    & \mu_l = W_l \mu_{l-1} + b_l,
    & r_l = |W_l| r_{l-1},\\[0.8ex]
    & \underline{z}_l = \psi\left(\mu_l - r_l\right),
    & \overline{z}_l = \psi\left(\mu_l + r_l\right),\\
\end{array}
$$
where $W_l$ and $b_l$ are $l$-th layer weight matrix and bias vector, respectively, $\eta = \langle W_1, b_1, W_2, b_2\ldots, W_L, b_L \rangle$, and $|\cdot|$ is an element-wise absolute value operator. 

%For a monotonic activation function $h_k$, we get the interval bound defined by:
%$$
%      \underline{z}_k = h(\underline{z}_{k-1}) \text{ , }
%      \overline{z}_k = h(\overline{z}_{k-1}).
%$$

Therefore, our interval hypernetwork  $\hypnet([\barbelow e_t, \bar e_t]; \hypwei) = [\barbelow \theta_t, \bar \theta_t]$ transforms the embedding of the $t$-th task into interval weights of the target network.
% Also, using a hypernetwork, we can propagate a point $e$ instead of an interval with a trivial segment $[\barbelow e,\bar e]$. We will use the notation
%$\hypnet(e, \hypwei) = \hypnet([\barbelow e, \bar e], \hypwei) = [\barbelow \theta, \bar \theta] = [\theta,\theta] = \theta$. 
\paragraph{\our{}}

In \our{}, trainable interval embeddings $[\barbelow e_t, \bar e_t]$ are used for producing separated network weights for each continuous learning task $t$, $t \in \lbrace 1, ..., T \rbrace$. These embeddings are propagated through the trainable IBP-based hypernetwork
$$
\hypnet([\barbelow e_t, \bar e_t]; \hypwei ) = [\barbelow \theta_t, \bar \theta_t]
$$ 
into interval weights of the target model. To derive the equations for linear layers, we introduce the following terms:
\begin{align*}
    \munderbar{W}_{k-1}^{+} &= \max{\lbrace\munderbar{W}_{k-1}, 0\rbrace}, \quad
    & \munderbar{W}_{k-1}^{-} = \max{\lbrace-\munderbar{W}_{k-1}, 0\rbrace}, \\
    \bar{W}_{k-1}^{+} & = \max{\lbrace\bar{W}_{k-1}, 0\rbrace}, 
    & \bar{W}_{k-1}^{-} = \max{\lbrace -\bar{W}_{k-1}, 0\rbrace},
\end{align*}

where $k\in\lbrace 2, \ldots, L\rbrace$, $[\barbelow \theta_t, \bar \theta_t] = \langle \left[ \munderbar{W}_1, \bar{W}_1 \right],  \left[ \munderbar{b}_1, \bar{b}_1 \right], \ldots, \left[ \munderbar{W}_L, \bar{W}_L \right], \left[ \munderbar{b}_L, \bar{b}_L \right]\rangle$,
$\min{\lbrace\cdot, \cdot\rbrace}$ and $\max{\lbrace\cdot, \cdot\rbrace}$ denote element-wise minimum and maximum operations. Then, one need to calculate such an expression:
\begin{align*}
    \left[\underline{z}_k, \overline{z}_k\right] &= \psi\left(\left[ \munderbar{W}_{k-1}, \bar{W}_{k-1} \right]\cdot \left[ \underline{z}_{k-1}, \overline{z}_{k-1} \right] + \left[\underline{b}_{k-1}, \overline{b}_{k-1}\right]\right).
\end{align*}

Based on the definition of multiplying two intervals, we obtain that:
\begin{align*}
    &\left[ \munderbar{W}_{k-1}, \bar{W}_{k-1} \right] \cdot \left[ \underline{z}_{k-1}, \overline{z}_{k-1} \right] \\ &= \left[\munderbar{W}_{k-1}^{+} - \munderbar{W}_{k-1}^{-}, \bar{W}_{k-1}^{+} - \bar{W}_{k-1}^{-}  \right]\cdot \left[ \underline{z}_{k-1}, \overline{z}_{k-1} \right] \\ &= \left( \left[\munderbar{W}_{k-1}^{+}, \bar{W}_{k-1}^{+}\right] - \left[\bar{W}_{k-1}^{-},  \munderbar{W}_{k-1}^{-}  \right]\right)\cdot \left[ \underline{z}_{k-1}, \overline{z}_{k-1} \right] \\ &= \left[\munderbar{W}_{k-1}^{+}, \bar{W}_{k-1}^{+}\right]\cdot \left[ \underline{z}_{k-1}, \overline{z}_{k-1} \right] \\ &-  \left[\bar{W}_{k-1}^{-}, \munderbar{W}_{k-1}^{-}\right]\cdot \left[ \underline{z}_{k-1}, \overline{z}_{k-1} \right].
\end{align*}
Generally, distributivity does not hold for interval arithmetic, which means that $\left(\left[ a,b \right] + \left[ c,d \right]\right) \cdot \left[ e,f \right]$ does not equal $\left[ a+c,b+d \right]\cdot \left[ e,f \right]$ for some $a,b,c,d,e,f \in \mathbb{R}$ in general. However, this property holds when the multiplication of intervals $\left[a,b\right]\cdot \left[ c,d \right]$ results in an interval with non-negative endpoints. This is why the decomposition of lower and upper matrices into the subtraction of positive and negative matrices is useful. Using the definition of interval multiplication and the fact that $\munderbar{W}_{k-1}^{+}, \bar{W}_{k-1}^{+}, \barbelow{W}_{k-1}^{-}, \bar{W}_{k-1}^{-}, \underline{z}_{k-1}, \overline{z}_{k-1} \geq ~ 0$, we have:
\begin{align*}
    &\left[\munderbar{W}_{k-1}^{+}, \bar{W}_{k-1}^{+}\right] \cdot \left[ \underline{z}_{k-1}, \overline{z}_{k-1} \right] \\
    &= \left[\min_{\substack{a\in\lbrace \munderbar{W}_{k-1}^{+}, \bar{W}_{k-1}^{+}\rbrace \\ b\in\lbrace \underline{z}_{k-1}, \overline{z}_{k-1}\rbrace}} \lbrace a \cdot b \rbrace, \max_{\substack{a\in\lbrace \munderbar{W}_{k-1}^{+}, \bar{W}_{k-1}^{+}\rbrace \\ b\in\lbrace \underline{z}_{k-1}, \overline{z}_{k-1}\rbrace}} \lbrace a \cdot b \rbrace \right] \\ &= \left[ \munderbar{W}_{k-1}^{+}\cdot \underline{z}_{k-1}, \bar{W}_{k-1}^{+}\cdot \overline{z}_{k-1} \right].
\end{align*}

Analogously,
\begin{align*}
     &\left[\bar{W}_{k-1}^{-}, \munderbar{W}_{k-1}^{-}\right] \cdot \left[ \underline{z}_{k-1}, \overline{z}_{k-1} \right] \\
    &= \left[\min_{\substack{a\in\lbrace \munderbar{W}_{k-1}^{-}, \bar{W}_{k-1}^{-}\rbrace \\ b\in\lbrace \underline{z}_{k-1}, \overline{z}_{k-1}\rbrace}} \lbrace a \cdot b \rbrace, \max_{\substack{a\in\lbrace \munderbar{W}_{k-1}^{-}, \bar{W}_{k-1}^{-}\rbrace \\ b\in\lbrace \underline{z}_{k-1}, \overline{z}_{k-1}\rbrace}} \lbrace a \cdot b \rbrace \right] \\ &= \left[ \bar{W}_{k-1}^{-}\cdot \underline{z}_{k-1}, \munderbar{W}_{k-1}^{-}\cdot \overline{z}_{k-1} \right] \\ &= -\left[ -\munderbar{W}_{k-1}^{-}\cdot \overline{z}_{k-1}, -\bar{W}_{k-1}^{-}\cdot \underline{z}_{k-1} \right] ,
\end{align*}
which gives us the final form of the equations:
\begin{align*}
    \underline{z}_k &= \psi\left(\munderbar{W}_{k-1}^{+}\cdot \underline{z}_{k-1} - \munderbar{W}_{k-1}^{-}\cdot\overline{z}_{k-1} + \underline{b}_{k-1} \right), \\
    \overline{z}_k &= \psi\left(\bar{W}_{k-1}^{+}\cdot \overline{z}_{k-1} - \bar{W}_{k-1}^{-}\cdot\underline{z}_{k-1} + \overline{b}_{k-1} \right).
\end{align*}
Since the equations above are derived for propagation through fully connected layers, a similar version for convolutional layers can be easily obtained, as a convolution operation can be expressed as a multiplication operation. The final form of these equations is as follows:
\begin{align*}
    \underline{z}_k &= \psi\left(\munderbar{W}_{k-1}^{+}* \underline{z}_{k-1} - \munderbar{W}_{k-1}^{-}*\overline{z}_{k-1} + \underline{b}_{k-1} \right), \\
    \overline{z}_k &= \psi\left(\bar{W}_{k-1}^{+}* \overline{z}_{k-1} - \bar{W}_{k-1}^{-}*\underline{z}_{k-1} + \overline{b}_{k-1} \right),
\end{align*}
where the $*$ operator denotes the convolution operation. Unfortunately, such equations cannot be easily obtained for batch normalization layers.
\section{The \our{} algorithm}
\label{sec:algorithm}

\our{} is presented below as Algorithm~\ref{alg:hint}. 

    \begin{algorithm}[!t]
    \caption{The pseudocode of \our{}.}
    \label{alg:hint}
    \begin{algorithmic}
    \Require 
    %{\bf Require:} 
    hypernetwork $\hypnet$ with weights $\hypwei$, target network $\phi$, softmax function $\sigma(\cdot)$, perturbation value $\gamma > 0$, regularization strength $\beta > 0$, cross-entropy components strength
    % parameter controlling the convex combination of standard cross-entropy and worst-case cross-entropy
    $\kappa \in \left(0, 1\right)$, dimensionality of the embedding space $M$, $n$ training iterations, datasets $\lbrace D_1, D_2, ..., D_T\rbrace$, 
    $(x_{i,t}, y_{i,t}) \in D_{t}, t \in \lbrace 1, \ldots, T \rbrace$, $i\in \lbrace 1, \ldots, N_t \rbrace$, $N_t$ is a number of samples in the dataset $D_t$.

    \Ensure 
    %{\bf Ensure:} 
    updated hypernetwork $\hypnet$ weights $\hypwei$
    
    %\BlankLine
    \State 
    Initialize randomly weights $\hypwei$ with pre-embeddings $(\preem_1, \preem_2, ..., \preem_T)$ and corresponding perturbation vectors $(\epsilon^{*}_1, \epsilon^{*}_2, \ldots, \epsilon^{*}_T)$

    \For{$t \leftarrow 1$ to $T$} 
        \If{$t > 1$}  
            \State 
            $\hypwei^{*} \leftarrow  \hypwei$

            \For{$t^{\prime} \leftarrow 1$ to $t-1$}
                \State
                $\epsilon^{*}_{t^{\prime}} \leftarrow \gamma\cdot\sigma\left( \epsilon^{*}_{t^{\prime}} \right)$
                \State
                Store $[\boldsymbol{\barbelow{\theta}}_{t^{\prime}}^*, \boldsymbol{\bar \theta_{t^{\prime}}^*}]\leftarrow$ 
                \State
                $\leftarrow \hypnet([ \frac{\gamma}{M}\cos\left(\preem_{t^{\prime}}\right) - \epsilon^*_{t^{\prime}},  \frac{\gamma}{M}\cos\left(\preem_{t^{\prime}}\right) +\epsilon^*_{t^{\prime}}]; \hypwei^*)$
                 
            \EndFor
        \EndIf
        \For{$i \leftarrow 1$ to $n$}
            \If{$i \leq \floor*{\frac{n}{2}}$}
                \State
                $\hat{\epsilon}^{*}_i \leftarrow \frac{i}{\floor*{\frac{n}{2}}} \cdot \gamma$
            \Else
                \State
                $\hat{\epsilon}^{*}_i \leftarrow \gamma$
            \EndIf
            \State
            $\epsilon^{*}_t \leftarrow \hat{\epsilon}^{*}_i\cdot\sigma\left( \epsilon^{*}_t \right)$ 
            \State 
            $[\boldsymbol{\barbelow \theta_t}, \boldsymbol{\bar \theta_t}] \leftarrow$
            \State
            $\leftarrow \hypnet([ \frac{\gamma}{M}\cos\left(\preem_t\right) - \epsilon^{*}_t,  \frac{\gamma}{M}\cos\left(\preem_t\right) +\epsilon^{*}_t]; \hypwei)$
            \State 
            $[\barbelow{\hat{y}}_{i, t}, \bar{\hat{y}}_{i, t}] \leftarrow \phi(\boldsymbol{x}_{i, t};  [\boldsymbol{\barbelow \theta_t}, \boldsymbol{\bar \theta_t}])$
            \If{$t = 1$}  
                \State 
                $\mathcal{L} \leftarrow \mathcal{L}_{current}$\;
            \Else
                \State 
                $\mathcal{L} \leftarrow \mathcal{L}_{current} + \beta \cdot \mathcal{L}_{output}$
            \EndIf
            \State 
            Update $\hypwei$, $a_t$ and $\epsilon_t$
        \EndFor
        \State
        Freeze and store $\boldsymbol{a}_t$ and $\epsilon_t$
    \EndFor
    \end{algorithmic}
    \end{algorithm}

\section{Lipschitz regularity of an interval-based hypernetwork}\label{sec:lipschitz_proof}
    \begin{prop}\label{prop:lipschitz_property}
    Let $(e_1, e_2, \ldots, e_T)$ be embedding centers, $T$ be the number of CL tasks, $\gamma > 0$ be a perturbation value, %$\epsilon^*$ be a ve ector of ones, 
    $\hypnet(\cdot; \hypwei)$ be an MLP-based hypernetwork with weights $\hypwei = \langle \eta_1, b_1,\ldots, \eta_L, b_L \rangle$ and an activation function, $\psi\left(\cdot\right)$, is Lipschitz continuous with a real positive constant $K$, $M$ be a natural number representing the dimensionality of the embedding space, $\barbelow e_t$, $\bar e_t$ be calculated according to Section \ref{sec:interval_hypernetwork} \eqref{preem2em} with $\epsilon_t \equiv \epsilon^*$ being a vector of ones, where $t \in \{ 1, 2, \ldots, T \}$, $L$ be a number of the hypernetwork's layers. 
    Then, for each $t$
    \begin{equation*}
        \sup_{\mu,\xi\in\left[ \munderbar{e}_t, \bar{e}_t \right]}\norm{}{\hypnet\left( \mu;\eta \right) - \hypnet\left( \xi;\eta \right)}_2 \leq K^L \prod_{i=1}^{L}\norm{}{\eta_i}_2 \norm{}{\munderbar{e}_t - \bar{e}_t}_2.
    \end{equation*}       
    % \PawelW{argumentem $\psi$ nie jest embedding tylko liczba rzeczywista. Pytanie z resztą, czy $\psi'$ tu traktujemy jako wektor pochodnych funkcji aktywancj dla całej warstwy, czy pojedynczego neuronu? Bo jak pojedynczego neuronu, to norma $\| \|$ nie ma sensu. A jak warstwy, to na norma może być sporo większa niż 1}
    \begin{proof}[Proof of Proposition \ref{prop:lipschitz_property}]
        From Proposition 1, \cite{scaman2019lipschitzregularitydeepneural}, and the fact that the hypernetwork is an MLP-based neural network with a Lipschitz continuous activation function, for any $\mu,\xi\in \left[ \barbelow{e}_t, \bar{e}_t \right]$, $t\in \{ 1,\ldots, T \}$, we have:
        \begin{equation*}
            \norm{}{\hypnet\left( \mu;\eta \right) - \hypnet\left( \xi;\eta \right)}_2 \leq K^L\prod_{i=1}^{L}\norm{}{\eta_i}_2\cdot \norm{}{\mu - \xi}_2.
        \end{equation*}
        It now suffices to take the supremum over $\mu,\xi \in \left[ \barbelow{e}_t, \bar{e}_t \right]$ on both sides of the above inequality:
        \begin{equation*}
        \sup_{\mu,\xi\in\left[ \munderbar{e}_t, \bar{e}_t \right]}\norm{}{\hypnet\left( \mu;\eta \right) - \hypnet\left( \xi;\eta \right)}_2 \leq K^L\prod_{i=1}^{L}\norm{}{\eta_i}_2 \norm{}{\barbelow{e}_t - \bar{e}_t}_2,
        \end{equation*}
        where
        \begin{equation*}
            \sup_{\mu,\xi\in\left[ \munderbar{e}_t, \bar{e}_t \right]}\norm{}{\mu - \xi}_2 = \norm{}{\barbelow{e}_t - \bar{e}_t}_2.
        \end{equation*}
    \end{proof}
    \end{prop}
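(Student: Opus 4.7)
The plan is to invoke the standard Lipschitz estimate for feed-forward MLPs and then reduce the supremum to the Euclidean diameter of the input hyperrectangle. First, I would observe that each affine map $z \mapsto \eta_i z + b_i$ is Lipschitz with constant $\norm{}{\eta_i}_2$, while each activation application is Lipschitz with constant $K$ by the hypothesis on $\psi$. Chaining these $L$ layers and multiplying the per-layer Lipschitz constants yields, for any inputs $\mu, \xi$,
\begin{equation*}
\norm{}{\hypnet(\mu; \eta) - \hypnet(\xi; \eta)}_2 \leq K^L \prod_{i=1}^{L} \norm{}{\eta_i}_2 \cdot \norm{}{\mu - \xi}_2.
\end{equation*}
I would cite the standard MLP Lipschitz bound from the literature rather than reprove it, as the argument is a textbook induction on depth.

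Second, I would take the supremum over $\mu, \xi \in [\barbelow{e}_t, \bar{e}_t]$ on both sides. The prefactor is independent of $\mu$ and $\xi$, so it factors out, leaving the geometric task of bounding $\sup_{\mu, \xi \in [\barbelow{e}_t, \bar{e}_t]} \norm{}{\mu - \xi}_2$. Since the hyperrectangle is a Cartesian product of one-dimensional intervals, the coordinate-wise estimate $|\mu^{(j)} - \xi^{(j)}| \leq \bar{e}_t^{(j)} - \barbelow{e}_t^{(j)}$ holds with equality at the endpoints, and summing the squares gives $\norm{}{\mu - \xi}_2 \leq \norm{}{\barbelow{e}_t - \bar{e}_t}_2$, again attained at opposite corners. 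Substituting yields the claimed inequality.

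There is no substantial obstacle here: the proof is a direct combination of two well-known facts, namely Lipschitz composition for MLPs and the fact that the Euclidean diameter of a box equals the length of its main diagonal. The only mild ambiguity is the meaning of $\norm{}{\eta_i}_2$ — the tightest bound uses the spectral norm, but since the spectral norm is dominated by the Frobenius norm, either reading yields a valid upper bound. The hypotheses on $\gamma$, $M$, and the specific form of $[\barbelow{e}_t, \bar{e}_t]$ via the $\cos$-nesting construction play no role in the Lipschitz argument itself; they appear to be included only for consistency with how the proposition is instantiated alongside Lemma \ref{lemma:intersection} elsewhere in the paper.
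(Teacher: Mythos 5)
Your proposal is correct and follows essentially the same route as the paper: invoke the standard product-of-spectral-norms Lipschitz bound for MLPs (the paper cites Proposition 1 of \cite{scaman2019lipschitzregularitydeepneural} for exactly this), then take the supremum over the hyperrectangle and identify it with the diagonal length $\norm{}{\barbelow{e}_t - \bar{e}_t}_2$. Your explicit coordinate-wise justification of the diameter step is a small elaboration the paper leaves implicit, but the argument is the same.
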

A similar result can be proven for convolutional neural networks. For more details, please see \cite{scaman2019lipschitzregularitydeepneural}.
\section{Proof of non-forgetting in \our{}}
\label{sec:proof}

    \begin{proof}[Proof of Lemma \ref{lemma:intersection}]
    Take arbitrary $t_1, t_2 \in\lbrace 1, 2, \ldots, T\rbrace$ and $i$-th coordinate of the embeddings $e_{t_1}, e_{t_2}$, where $i\in\lbrace 1, 2, \ldots, M\rbrace$. Suppose the worst-case scenario, where
    \begin{align*}
        \cos\left(e_{t_1}^{(i)}\right) &= 1, \\
        \cos\left(e_{t_2}^{(i)}\right) &= -1.
    \end{align*}
    Then the absolute difference between $\frac{\gamma}{M}\cos\left(e_{t_1}^{(i)}\right)$ and $\frac{\gamma}{M}\cos\left(e_{t_2}^{(i)}\right)$ will be the biggest and can be calculated as
    \begin{align*}
        \left| \frac{\gamma}{M}\cos\left(e_{t_1}^{(i)}\right) - \frac{\gamma}{M}\cos\left(e_{t_2}^{(i)}\right) \right| &= \frac{2\gamma}{M} \\
        &= 2\gamma\cdot \sigma\left( \epsilon^{*} \right)^{(i)}.
    \end{align*}
    Note that the maximum distance at the $i$-th coordinate between transformed centers of the intervals is equal to $\frac{2\gamma}{M}$, but this value is the same as $2\gamma \cdot \sigma\left( \epsilon^{*} \right)^{(i)}$. This means that in the worst case the intersection is a single point. As $t_1$, $t_2$ and $i$ were chosen arbitrary, the intersection
    $$
    [\barbelow{e}, \bar{e}] = \bigcap_{t=1}^T [\barbelow{e}_t, \bar{e}_t]
    $$
    is non-empty, what ends the proof.
    \end{proof}
\begin{proof}[Proof of Theorem \ref{theorem:non_forgetting}]
    As we know that the intersection $\left[ \barbelow{e}, \bar{e} \right]$ is non-empty, we just need to show that $A_t \subset A$ for every $t\in\lbrace 1, \ldots, T\rbrace$. Please note that $\lbrace A_t \rbrace_{t=1}^T$ is a sequence of non-empty, compact subsets of $R^N$. Every compact set is bounded, which implies that
    \begin{equation}
    \sup\left(\bigcap_{t=1}^T A_t \right) = \min_{t\in\lbrace 1,\ldots, T\rbrace}\sup\left( A_t \right).
    \end{equation}
    We can use this identity to prove that $A \subset A_{t^*}$, where $t^{*}\in\lbrace 1, \ldots, T\rbrace$ is arbitrary chosen. It is straightforward to see that
    {\small
    \begin{equation}
    \begin{aligned}
A 
&= \left\{ \mu \,\middle|\, \forall \epsilon > 0,\ \forall x \in X_t,\ \exists y \in Y_t:\ \sup_{\mu \in \bigcap_{t=1}^T [\underline{e}_t, \bar{e}_t]} C(\mu) \leq \epsilon \right\} \\
&= \left\{ \mu \,\middle|\, \forall \epsilon > 0,\ \forall x \in X_t,\ \exists y \in Y_t:\ \min_{t\in\{ 1,\ldots,T \}} \sup_{\mu \in [\underline{e}_t, \bar{e}_t]} C(\mu) \leq \epsilon \right\} \\
&\subseteq \left\{ \mu \,\middle|\, \forall \epsilon > 0,\ \forall x \in X_t,\ \exists y \in Y_t:\ \forall_{t\in\{ 1,\ldots, T \}} \sup_{\mu \in [\underline{e}_t, \bar{e}_t]} C(\mu) \leq \epsilon \right\} \\
&= \bigcap_{t=1}^T \left\{ \mu \,\middle|\, \forall \epsilon > 0,\ \forall x \in X_t,\ \exists y \in Y_t:\ \sup_{\mu \in [\underline{e}_t, \bar{e}_t]} C(\mu) \leq \epsilon \right\} \\
&= \bigcap_{t=1}^T A_t \subseteq A_{t^*}.
\end{aligned}
    \end{equation}
    where 
    \begin{equation}
    C\left(\mu\right) = \norm{}{y - \phi\left(x, \hypnet\left(\mu; \hypwei_T\right)\right)}_2.
    \end{equation}
    }
    As $t^*$ was arbitrary chosen, we infer that for each $t\in\lbrace 1, \ldots, T\rbrace$ $A \subset A_t$, which ends the proof.
\end{proof}

\section{Relaxation technique}\label{sec:relax_tech}
Based on our experiments and those conducted in \cite{wolczyk2022continual}, we have observed that training convolutional neural networks in an interval-based setup presents significant challenges. We argue that many of these difficulties stem from the absence of a straightforward substitute for an interval-based batch normalization layer. To address these issues, we propose mitigating the challenges by relaxing the interval constraints in the target network while maintaining a fully interval-based hypernetwork. The pseudocode for this approach is provided in Algorithm \ref{alg:relax_tech}.

While the relaxation technique facilitates effective training of convolutional neural networks, it also has a certain drawback, because the image of the target network obtained through the relaxation technique is only contained within the image of the target network trained using the fully interval-based method. This outcome is illustrated in Proposition \ref{prop:images_inclusion}.
\begin{prop}\label{prop:images_inclusion}
    Let $(e_1, e_2, \ldots, e_T)$ be embedding centers, $T$ be the number of CL tasks, $\gamma > 0$ be a perturbation value, %$\epsilon^*$ be a ve ector of ones, 
    $\hypnet(\cdot; \hypwei)$ be a hypernetwork with weights $\hypwei$, $M$ be a natural number representing the dimensionality of the embedding space, $\barbelow e_t$, $\bar e_t$ be calculated according to \ref{sec:interval_hypernetwork} \eqref{preem2em} with $\epsilon_t \equiv \epsilon^*$ being a vector of ones, where $t \in \{ 1, 2, \ldots, T \}$. Let also $\phi\left( \cdot;\left[ \barbelow{\theta}_t, \bar{\theta}_t \right] \right)$ be a target network with a non-negative and non-decreasing activation function $\psi\left(\cdot\right)$ and with interval weights generated by the hypernetwork. Then, for each $t\in\lbrace 1,\ldots,T \rbrace$ and non-negative observation $x$:
    \begin{equation*}
        \left[ \phi_{\text{min}}\left(x\right), \phi_{\text{max}}\left(x\right) \right] \subseteq \phi\left( x;\left[ \barbelow{\theta}_t, \bar{\theta}_t \right] \right),
    \end{equation*}
    where
    \begin{align*}
    \phi_{\text{min}}\left(x\right) &= \min\lbrace \phi\left( x; \barbelow{\theta}_t \right), \phi\left( x; \bar{\theta}_t \right) \rbrace, \\
        \phi_{\text{max}}\left(x\right) &= \max\lbrace \phi\left( x; \barbelow{\theta}_t \right), \phi\left( x; \bar{\theta}_t \right) \rbrace.
    \end{align*}
    \begin{proof}
        It is straightforward to see that:
    \begin{align*}
    \phi_{\text{min}}\left(x\right) &\geq \min_{\theta\in\left[ \munderbar{\theta}_t, \bar{\theta}_t \right]} \phi\left( x; \theta\right), \\
    \phi_{\text{max}}\left(x\right) &\leq \max_{\theta\in\left[ \munderbar{\theta}_t, \bar{\theta}_t \right]} \phi\left( x; \theta\right).
    \end{align*}
    This ends the proof, because
    \begin{equation*}
        \left[\min_{\theta\in\left[ \munderbar{\theta}_t, \bar{\theta}_t \right]} \phi\left( x; \theta\right), \max_{\theta\in\left[ \munderbar{\theta}_t, \bar{\theta}_t \right]} \phi\left( x; \theta\right)\right] = \phi\left( x;\left[ \barbelow{\theta}_t, \bar{\theta}_t \right] \right).
    \end{equation*}
    \end{proof}
\end{prop}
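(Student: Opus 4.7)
The plan is to reduce the inclusion to two elementary facts: that both $\phi(x;\barbelow{\theta}_t)$ and $\phi(x;\bar{\theta}_t)$ lie inside the IBP output hyperrectangle $\phi(x;[\barbelow{\theta}_t,\bar{\theta}_t])$, and that a hyperrectangle is closed under taking coordinatewise minima and maxima of its own points. Combined, these give $\phi_{\min}(x), \phi_{\max}(x) \in \phi(x;[\barbelow{\theta}_t,\bar{\theta}_t])$, and since a hyperrectangle is convex, the whole interval between these two vectors is contained as well, which is exactly the stated inclusion.

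For the first fact I would invoke soundness of IBP. Unfolding the layerwise definition from Appendix~\ref{sec:ia-in-nns} together with the explicit weight-interval propagation from Appendix~\ref{sec:hypernet-with-em-in}, an induction on the layer index $l$ shows that
\begin{equation*}
\{\, h_l(\cdot\,;\theta) : \theta \in [\barbelow{\theta}_t,\bar{\theta}_t] \,\} \subseteq [\barbelow{z}_l, \bar{z}_l],
\end{equation*}
using that $x \geq 0$ so that post-activation values stay non-negative, that $\psi$ is non-decreasing so that monotone bound propagation is sound, and that the affine part of each layer is handled by the $W^{+}, W^{-}$ decomposition. Specializing $\theta = \barbelow{\theta}_t$ and $\theta = \bar{\theta}_t$ then places both corresponding outputs inside $\phi(x;[\barbelow{\theta}_t,\bar{\theta}_t])$.

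The second fact is immediate: a hyperrectangle $\prod_i [a_i,b_i]$ contains any coordinatewise min or max of two of its points, since for each coordinate $i$ both $\min\{u_i,v_i\}$ and $\max\{u_i,v_i\}$ remain in $[a_i,b_i]$ whenever $u_i, v_i \in [a_i,b_i]$. Applying this to $u = \phi(x;\barbelow{\theta}_t)$ and $v = \phi(x;\bar{\theta}_t)$, and then invoking convexity coordinatewise, closes the argument.

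The main obstacle I anticipate is making the soundness induction in the first step fully precise for the architectures actually used. Under the hypothesis that $\psi$ is non-decreasing and non-negative and that layers are affine with the $W^{+}, W^{-}$ decomposition, the induction is routine and essentially a repackaging of the computations already laid out in Appendix~\ref{sec:hypernet-with-em-in}. The delicate case would be non-monotone or non-affine components such as batch normalization, whose exact interval semantics are non-obvious and which is precisely what motivates the relaxation technique in the first place; but these layers are outside the hypothesis of the proposition, so the proof as sketched should go through cleanly.
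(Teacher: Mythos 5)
Your proposal is correct and follows the same basic skeleton as the paper's proof: the two endpoint networks $\phi(x;\barbelow{\theta}_t)$ and $\phi(x;\bar{\theta}_t)$ are shown to lie in the output hyperrectangle, and the box $[\phi_{\min}(x),\phi_{\max}(x)]$ spanned by them is then contained in that hyperrectangle by coordinatewise closure. The one substantive difference is how the key containment is justified. The paper simply asserts the identity $\bigl[\min_{\theta\in[\barbelow{\theta}_t,\bar{\theta}_t]}\phi(x;\theta),\,\max_{\theta\in[\barbelow{\theta}_t,\bar{\theta}_t]}\phi(x;\theta)\bigr]=\phi\bigl(x;[\barbelow{\theta}_t,\bar{\theta}_t]\bigr)$ and reads off the result, whereas you prove only the inclusion $\supseteq$ (soundness of the layerwise interval propagation) by induction on the layer index. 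Your weaker claim is all that the proposition needs, and it is in fact the safer one: exact equality of the propagated box with the true reachable range does not hold for IBP in general because the layerwise min/max compound --- this is precisely why the paper needs the extra non-negativity hypotheses of Proposition~\ref{prop:images_equality} to upgrade the inclusion to an equality. So your route buys a proof of the stated inclusion that does not lean on an identity which, as written in the paper, is an over-claim; the paper's route buys brevity at the cost of implicitly invoking that stronger and generally false equality where only one direction of it is used. Your handling of the side conditions ($x\geq 0$, monotone non-negative $\psi$, the $W^{+},W^{-}$ decomposition, and the exclusion of batch normalization) is consistent with the hypotheses of the proposition.
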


One might ask whether equality can occur in Proposition \ref{prop:images_inclusion}. The answer is yes, and this is demonstrated in Proposition \ref{prop:images_equality}.

\begin{prop}\label{prop:images_equality}
    Let us assume that all the assumptions of Proposition \ref{prop:images_inclusion} are satisfied and that the notation is preserved. Additionally, assume that the interval weights $[\barbelow{\theta}_t, \bar{\theta}_t] = \langle [\barbelow{W}_1^{\left(t\right)}, \bar{W}_1^{\left(t\right)}], [\barbelow{b}_1^{\left(t\right)}, \bar{b}_1^{\left(t\right)}], \ldots, [\barbelow{W}_L^{\left(t\right)}, \bar{W}_L^{\left(t\right)}], [\barbelow{b}_L^{\left(t\right)}, \bar{b}_L^{\left(t\right)}]\rangle$ produced by the hypernetwork are non-negative, where $L$ denotes the number of layers in the target network. Then, for each $t\in\lbrace 1,\ldots,T \rbrace$ and non-negative observation $x$:
    \begin{equation*}
        \left[ \phi_{\text{min}}\left(x\right), \phi_{\text{max}}\left(x\right) \right] = \phi\left( x;\left[ \barbelow{\theta}_t, \bar{\theta}_t \right] \right).
    \end{equation*}
    \begin{proof}
        Fix arbitrary $t\in\lbrace 1, \ldots, T\rbrace$ and let us introduce the following notation for any $k\in\lbrace 1,\ldots, L-1 \rbrace$:
        \begin{align*}
            \bar{z}_{k}^{\left(t\right)} &= \psi\left( \bar{W}_{k}^{\left(t\right)}\cdot\bar{z}_{k-1}^{\left(t\right)} + \bar{b}_{k}^{\left(t\right)} \right), \\
            \barbelow{z}_{k}^{\left(t\right)} &= \psi\left( \barbelow{W}_{k}^{\left(t\right)}\cdot\barbelow{z}_{k-1}^{\left(t\right)} + \barbelow{b}_{k}^{\left(t\right)} \right), \\
            \barbelow{z}_0^{\left(t\right)} &= x, \\
            \bar{z}_0^{\left(t\right)} &= x.
        \end{align*}
        Based on the assumptions, we know that the weights generated by the hypernetwork are non-negative and that $\psi\left(\cdot\right)$ is a non-negative valued and non-decreasing activation function. This implies that for any $\barbelow{\alpha}_k^{\left(t\right)},\bar{\alpha}_k^{\left(t\right)}\in\left[\barbelow{W}_k^{\left(t\right)}, \bar{W}_k^{\left(t\right)}\right]$, $\barbelow{\beta}_k^{\left(t\right)},\bar{\beta}_k^{\left(t\right)}\in\left[\barbelow{b}_k^{\left(t\right)}, \bar{b}_k^{\left(t\right)}\right]$ such that $\barbelow{\alpha}_k^{\left(t\right)} \leq \bar{\alpha}_k^{\left(t\right)}$, $\barbelow{\beta}_k^{\left(t\right)} \leq \bar{\beta}_k^{\left(t\right)}$, $k \in {1, \ldots, L}$:
        \begin{align*}
            \barbelow{\alpha}_k^{\left(t\right)}\cdot\barbelow{z}_{k-1}^{\left(t\right)} + \barbelow{\beta}_k^{\left(t\right)} &\leq \bar{\alpha}_k^{\left(t\right)}\cdot\bar{z}_{k-1}^{\left(t\right)} + \bar{\beta}_k^{\left(t\right)},  \\
            \psi\left(\barbelow{\alpha}_k^{\left(t\right)}\cdot\barbelow{z}_{k-1}^{\left(t\right)} + \barbelow{\beta}_k^{\left(t\right)}\right) &\leq \psi\left(\bar{\alpha}_k^{\left(t\right)}\cdot\bar{z}_{k-1}^{\left(t\right)} + \bar{\beta}_k^{\left(t\right)}\right).
        \end{align*}
        Hence,
        \begin{align*}
            \barbelow{W}_L^{\left(t\right)}\cdot\barbelow{z}_{L-1}^{\left(t\right)} + \barbelow{b}_L^{\left(t\right)} &= \phi_{\text{min}}\left(x\right), \\
             \barbelow{W}_k^{\left(t\right)}\cdot\barbelow{z}_{k-1}^{\left(t\right)} + \barbelow{b}_k^{\left(t\right)} &= \min_{\substack{a\in\lbrace \munderbar{W}_{k}^{\left(t\right)}, \bar{W}_{k}^{\left(t\right)}\rbrace \\ b\in\lbrace \munderbar{z}_{k-1}^{\left(t\right)}, \bar{z}_{k-1}^{\left(t\right)}\rbrace}} \lbrace a \cdot b +\barbelow{b}_k^{\left(t\right)}\rbrace.
        \end{align*}
        Analogously,
        \begin{align*}
        \bar{W}_L^{\left(t\right)}\cdot\bar{z}_{L-1}^{\left(t\right)} + \bar{b}_L^{\left(t\right)} &= \phi_{\text{max}}\left(x\right), \\
             \bar{W}_k^{\left(t\right)}\cdot\bar{z}_{k-1}^{\left(t\right)} + \bar{b}_k^{\left(t\right)} &= \max_{\substack{a\in\lbrace \munderbar{W}_{k}^{\left(t\right)}, \bar{W}_{k}^{\left(t\right)}\rbrace \\ b\in\lbrace \munderbar{z}_{k-1}^{\left(t\right)}, \bar{z}_{k-1}^{\left(t\right)}\rbrace}} \lbrace a \cdot b +\bar{b}_k^{\left(t\right)}\rbrace.
        \end{align*}
        Then,
        \begin{align*}
            \left[ \phi_{\text{min}}\left(x\right), \phi_{\text{max}}\left(x\right) \right] &= \left[\barbelow{W}_L^{\left(t\right)}, \bar{W}_L^{\left(t\right)} \right] \cdot \left[\barbelow{z}_{L-1}^{\left(t\right)}, \bar{z}_{L-1}^{\left(t\right)} \right] \\ &+ \left[\barbelow{b}_L^{\left(t\right)}, \bar{b}_L^{\left(t\right)} \right] \\ &= \phi\left(x,\left[ \barbelow{\theta}_t, \bar{\theta}_t \right]\right),
        \end{align*}
        % \PawelW{Trzeba to zmieścić w szpalcie}
        what ends the proof.
    \end{proof}
\end{prop}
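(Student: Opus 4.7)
The plan is to prove equality by establishing the reverse inclusion to Proposition \ref{prop:images_inclusion}. Since that proposition already gives $\left[\phi_{\text{min}}(x), \phi_{\text{max}}(x)\right] \subseteq \phi\left(x; \left[\barbelow{\theta}_t, \bar{\theta}_t\right]\right)$, the task reduces to showing that the interval-arithmetic output of the target network, computed layer by layer using the formulas in Appendix \ref{sec:hypernet-with-em-in}, collapses to the interval $\left[\phi(x; \barbelow{\theta}_t),\, \phi(x; \bar{\theta}_t)\right]$ when every weight and bias is non-negative. Fix $t$ and suppress the superscript. I define inductively $\barbelow{z}_0 = \bar{z}_0 = x$ and $\barbelow{z}_k = \psi(\barbelow{W}_k \barbelow{z}_{k-1} + \barbelow{b}_k)$, $\bar{z}_k = \psi(\bar{W}_k \bar{z}_{k-1} + \bar{b}_k)$.

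The key lemma to prove by induction on $k$ is twofold: (i) $\barbelow{z}_k \leq \bar{z}_k$ coordinate-wise and both are non-negative, and (ii) $\left[\barbelow{z}_k, \bar{z}_k\right]$ coincides with the exact layer-$k$ interval-arithmetic output starting from the input $[x,x]$. The base case is immediate because $x \geq 0$. For the inductive step I use the elementary fact that when $0 \leq a_1 \leq a_2$ and $0 \leq b_1 \leq b_2$, interval multiplication gives $[a_1, a_2] \cdot [b_1, b_2] = [a_1 b_1, a_2 b_2]$; together with non-negativity of the bias intervals this yields $[\barbelow{W}_k, \bar{W}_k] \cdot [\barbelow{z}_{k-1}, \bar{z}_{k-1}] + [\barbelow{b}_k, \bar{b}_k] = [\barbelow{W}_k \barbelow{z}_{k-1} + \barbelow{b}_k,\ \bar{W}_k \bar{z}_{k-1} + \bar{b}_k]$. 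The non-decreasing, non-negative activation $\psi$ then preserves this structure, so the interval after layer $k$ is exactly $[\barbelow{z}_k, \bar{z}_k]$, which proves both (i) and (ii).

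Applying this through all $L$ layers gives $\phi(x; [\barbelow{\theta}_t, \bar{\theta}_t]) = [\barbelow{z}_L, \bar{z}_L]$, and by definition of the deterministic forward pass these endpoints are $\phi(x; \barbelow{\theta}_t)$ and $\phi(x; \bar{\theta}_t)$. Under our monotonicity, $\phi(x; \barbelow{\theta}_t) \leq \phi(x; \bar{\theta}_t)$ coordinate-wise, so these values coincide with $\phi_{\text{min}}(x)$ and $\phi_{\text{max}}(x)$ respectively, completing the reverse inclusion.

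The main obstacle I anticipate is the inductive step: distributivity and tight-endpoint behavior of interval multiplication fail in general, so I must lean carefully on the two non-negativity hypotheses (non-negative weights and non-negative hidden activations produced by $\psi$) to guarantee that the four candidate products $\{\barbelow{W}_k \barbelow{z}_{k-1}, \barbelow{W}_k \bar{z}_{k-1}, \bar{W}_k \barbelow{z}_{k-1}, \bar{W}_k \bar{z}_{k-1}\}$ are already ordered with the first as minimum and the last as maximum. Once that ordering is secured at every layer, the argument is just a clean recursion, mirroring the structure already used in Proposition \ref{prop:images_inclusion} but now tracked as an exact identity rather than a one-sided bound.
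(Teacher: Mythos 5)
Your proposal is correct and follows essentially the same route as the paper's proof: a layer-wise recursion in which the non-negativity of the weights, biases, and post-activation values forces the interval product to collapse to $[\barbelow{W}_k\barbelow{z}_{k-1}+\barbelow{b}_k,\ \bar{W}_k\bar{z}_{k-1}+\bar{b}_k]$, with the monotone non-negative $\psi$ preserving the ordering, so that the propagated endpoints are exactly the forward passes at $\barbelow{\theta}_t$ and $\bar{\theta}_t$. Your framing as an explicit induction combined with the reverse inclusion to Proposition \ref{prop:images_inclusion} is just a slightly more structured presentation of the same argument.
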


\begin{algorithm}[!t]
    \caption{The pseudocode of \our{} relaxation technique.}
    \label{alg:relax_tech}
    \begin{algorithmic}
    \Require 
    %{\bf Require:} 
    hypernetwork $\hypnet$ with weights $\hypwei$, target network $\phi$, softmax function $\sigma(\cdot)$, perturbation value $\gamma > 0$, regularization strength $\beta > 0$, cross-entropy components strength
    % parameter controlling the convex combination of standard cross-entropy and worst-case cross-entropy
    $\kappa \in \left(0, 1\right)$, dimensionality of the embedding space $M$, $n$ training iterations, datasets $\lbrace D_1, D_2, ..., D_T\rbrace$, 
    $(x_{i,t}, y_{i,t}) \in D_{t}, t \in \lbrace 1, \ldots, T \rbrace$, $i\in \lbrace 1, \ldots, N_t \rbrace$, $N_t$ is a number of samples in the dataset $D_t$.

    \Ensure 
    %{\bf Ensure:} 
    updated hypernetwork $\hypnet$ weights $\hypwei$
    
    %\BlankLine
    \State 
    Initialize randomly weights $\hypwei$ with pre-embeddings $(\preem_1, \preem_2, ..., \preem_T)$ and corresponding perturbation vectors $(\epsilon^{*}_1, \epsilon^{*}_2, \ldots, \epsilon^{*}_T)$

    \For{$t \leftarrow 1$ to $T$} 
        \If{$t > 1$}  
            \State 
            $\hypwei^{*} \leftarrow  \hypwei$

            \For{$t^{\prime} \leftarrow 1$ to $t-1$}
                \State
                $\epsilon^{*}_{t^{\prime}} \leftarrow \gamma\cdot\sigma\left( \epsilon^{*}_{t^{\prime}} \right)$
                \State
                Store $[\boldsymbol{\barbelow{\theta}}_{t^{\prime}}^*, \boldsymbol{\bar \theta_{t^{\prime}}^*}]\leftarrow$ 
                \State
                $\leftarrow \hypnet([ \frac{\gamma}{M}\cos\left(\preem_{t^{\prime}}\right) - \epsilon^*_{t^{\prime}},  \frac{\gamma}{M}\cos\left(\preem_{t^{\prime}}\right) +\epsilon^*_{t^{\prime}}]; \hypwei^*)$
                 
            \EndFor
        \EndIf
        \For{$i \leftarrow 1$ to $n$}
            \If{$i \leq \floor*{\frac{n}{2}}$}
                \State
                $\hat{\epsilon}^{*}_i \leftarrow \frac{i}{\floor*{\frac{n}{2}}} \cdot \gamma$
            \Else
                \State
                $\hat{\epsilon}^{*}_i \leftarrow \gamma$
            \EndIf
            \State
            $\epsilon^{*}_t \leftarrow \hat{\epsilon}^{*}_i\cdot\sigma\left( \epsilon^{*}_t \right)$ 
            \State 
            $[\boldsymbol{\barbelow \theta_t}, \boldsymbol{\bar \theta_t}] \leftarrow$
            \State
            $\leftarrow \hypnet([ \frac{\gamma}{M}\cos\left(\preem_t\right) - \epsilon^{*}_t,  \frac{\gamma}{M}\cos\left(\preem_t\right) +\epsilon^{*}_t]; \hypwei)$
            \State 
            $\barbelow{\hat{y}}_{i, t} \leftarrow \phi(\boldsymbol{x}_{i, t};\boldsymbol{\barbelow \theta_t})$
            \State
            $\bar{\hat{y}}_{i, t} \leftarrow \phi(\boldsymbol{x}_{i, t};\boldsymbol{\bar \theta_t})$
            \State
            $\left[\barbelow{\hat{y}}_{i, t}, \bar{\hat{y}}_{i, t}\right] \leftarrow \left[ \min\lbrace \barbelow{\hat{y}}_{i, t}, \bar{\hat{y}}_{i, t} \rbrace,  \max\lbrace \barbelow{\hat{y}}_{i, t}, \bar{\hat{y}}_{i, t} \rbrace\right]$
            \If{$t = 1$}  
                \State 
                $\mathcal{L} \leftarrow \mathcal{L}_{current}$\;
            \Else
                \State 
                $\mathcal{L} \leftarrow \mathcal{L}_{current} + \beta \cdot \mathcal{L}_{output}$
            \EndIf
            \State 
            Update $\hypwei$, $a_t$ and $\epsilon_t$
        \EndFor
        \State
        Freeze and store $\boldsymbol{a}_t$ and $\epsilon_t$
    \EndFor
    \end{algorithmic}
    \end{algorithm}

\section{Training details}
\label{sec:train-details}

\paragraph{Datasets and CL setup} We use the following datasets: 1) Permuted MNIST-10 \cite{oshg2019hypercl}, consisting of 28x28 pixel grey-scale images of 10 classes of digits, where each
task is obtained by applying a random permutation to the input image pixels, with a typical length of T~=~10 tasks;
2) Split MNIST \cite{oshg2019hypercl}, containing tasks designed by sequentially pairing the digits to introduce task overlap, forming T~=~5 binary classification tasks;
3) Split CIFAR-100 \cite{goswami2024fecam}, consisting of 32×32 pixel color images of 100 classes;
%, with 500 and 100 examples per class for training and testing, respectively;
4) Split CIFAR-10 \cite{wolczyk2022continual}, with T~=~5 binary classification tasks;
5) TinyImageNet \cite{goswami2024fecam}, a subset of ImageNet \cite{russakovsky2015imagenet}, consisting of 64×64 pixel color images of 200 classes;
%, with 500 and 50 examples per class for training and testing, respectively
6) Permuted MNIST-100 \cite{goswami2024fecam}, similar to the Permuted MNIST-10 dataset, but with T~=~100 tasks, 10 classes each, only used in the ablation study.
When the task identity is unknown during inference, for Split CIFAR-100 we experiment with a setup of 5 tasks with 20 classes each, we do not conduct experiments on TinyImageNet in this setup.
When the task identity is known, we pick a setup with 10 tasks with 10 classes each and 20 tasks with 5 classes each for Split CIFAR-100 and 40 tasks with 5 classes each for TinyImageNet.

\paragraph{Training setup}
We apply three typical CL training setups: TIL, DIL, and CIL. 
When considering \our{} in the TIL scenario, we do not use any nesting approaches, and the input embedding intervals may have different lengths. Similarly, in the CIL setup, we also do not use nesting approaches; however, we employ entropy to infer the task identity. In the DIL scenario, we use the $\cos(\cdot)$ embedding nesting method and the input intervals are of the same length, to prevent the embeddings from collapsing to a trivial point.

\paragraph{Relaxation technique}Depending on the complexity of the target network architecture, we have two interval learning approaches: 1)~fully interval technique, meaning that we apply intervals to both the hypernetwork and target network; 2) interval relaxation technique, meaning we use a fully interval hypernetwork and a~non-interval target network, where we only restore the order of predictions after its last layer, hence we can also use vanilla batch normalization.
Regardless of the training setup (known / unknown task identity): 1) for target networks which are MLPs, we always use the fully interval training technique;
2) in cases when a convolutional target network is used (ResNet-18 and AlexNet), we apply the interval relaxation technique to the training process, since convolutional layers are generally more difficult to train with intervals \cite{wolczyk2022continual}.

\paragraph{Form of hypernetwork regularization loss}
The interval weights produced by the hypernetwork are contained within the interval
$$
\left[\min_{\mu\in\left[ \munderbar{e}_t, \bar{e}_t \right]}\hypnet\left(\mu;\hypwei\right), \max_{\mu\in\left[ \munderbar{e}_t, \bar{e}_t \right]}\hypnet\left(\mu;\hypwei\right)\right].
$$
As previously mentioned, the interval weights produced by the hypernetwork tend to be short, and regularization of the interval endpoints is typically sufficient. However, it is worth considering what would happen if regularization were applied to points within the interval that are not endpoints. We examined three types of regularization: (1) regularization applied only to the endpoints of the interval, (2) regularization applied to both the endpoints and the midpoint of the interval, and (3) regularization applied to the endpoints and randomly selected points from within the interval. We observed that the second and third approaches yielded slightly better results than the first. Although there was no significant difference between the second and third approaches, we opted to use the second approach.

\paragraph{Common hyperparameters} In each training process, we use the ReLU activation function and apply the Adam optimizer with different learning rates but the same default betas coefficients, i.e., $\beta_1 = 0.9$, $\beta_2 = 0.999$. Also, anytime we specify that a learning rate scheduler is used, we apply the ReduceLROnPlateau scheduler from PyTorch. There is a parameter $\kappa$  which is responsible for weighing the worst case loss wrt. to the basic entropy loss, as it takes part in their convex linear combination. We always use $\kappa = 0.5$. However, this hyperparameter is scheduled during training, so instead of using the same $\kappa$~value throughout the training, we use $\kappa_i = \max\lbrace 1 - 0.00005\cdot i, \kappa \rbrace$, where $i$ is the current training iteration. In other words, it means that at the beginning of the training we put more attention to the proper classification on the centers of the intervals and then, gradually, we put more attention to the worst case component. Furthermore, there is another parameter, namely $\beta$, which is responsible for the hypernetwork regularization strength, weighing the part of loss responsible for non-forgetting.

\paragraph{Hardware and software resources used} 
We implemented \our{} using Python 3.7.6, incorporating libraries such as hypnettorch 0.0.4 (von Oswald et al., 2019), PyTorch 1.5.0 with CUDA support, NumPy 1.18, Pandas 1.0.1, Matplotlib 3.1.3, seaborn 0.10.0, among others. Most of our computations are conducted on an NVIDIA GeForce RTX 4090, but some training sessions are also performed using NVIDIA GeForce RTX 3080, NVIDIA GeForce RTX 2070, NVIDIA A100, and NVIDIA DGX graphic cards.

\paragraph{Known task identity hyperparameters}
As it was stated in the main section of this work, here we do not use any interval nesting methods. Also, in this setting, we randomly initialize each task embedding. Let us cover the list of parameters used in this setup:

\begin{itemize}

\item Split MNIST -- the hypernetwork and target network used are two-layered MLPs with 75 neurons per layer and 400 neurons per layer, respectively. We use data augmentation as in \cite{von2019continual}, the Adam optimizer with a learning rate $lr = 0.001$, no scheduler, batches of size 128, embeddings with 72 dimensions and perturbation value $\gamma = 1$. Moreover, we use $\beta = 0.01$ and we conduct the training for 2000 iterations.

\item Permuted MNIST-10 and Permuted MNIST-100 -- the hypernetwork and target network used are two-layered MLPs with 100 neurons per layer and 1000 neurons per layer, respectively. We do not use any data augmentation and we choose the Adam optimizer with a~learning rate $lr = 0.001$, no scheduler. We use batch size equal to 128, embeddings with 24 dimensions, perturbation value $\gamma = 0.5$, $\beta = 0.01$ and conduct the training for 5000 iterations.

\item Split CIFAR-10 -- we use a one-layered MLP with 100 neurons as the hypernetwork and AlexNet with batch normalization as the target network. We do not apply any data augmentation. We use the Adam optimizer with a learning rate $lr = 0.001$ and the $lr$ scheduler. We use batch size equal to 128, embeddings with 48 dimensions, perturbation value $\gamma = 0.5$, $\beta = 0.001$ and conduct the training for 2000 iterations.

\item Split CIFAR-100 (10 tasks, 10 classes each) -- we use a one-layered MLP with 100 neurons as the hypernetwork and ResNet-18 with batch normalization as the target network. We apply the data augmentation as in \cite{goswami2024fecam}, use the Adam optimizer with a learning rate $lr = 0.001$ and the $lr$ scheduler. We use batch size equal to 32, embeddings with 48 dimensions, perturbation value $\gamma = 1$, $\beta = 0.01$ and conduct the training for 200 epochs.

\item Split CIFAR-100 (20 tasks, 5 classes each) -- we use a one-layered MLP with 100 neurons as the hypernetwork and AlexNet with batch normalization as the target network. We do not apply any data augmentation, use the Adam optimizer with a learning rate $lr = 0.001$ and the $lr$ scheduler. We use batch size equal to 32, embeddings with 24 dimensions, perturbation value $\gamma = 0.1$, $\beta = 0.01$ and conduct the training for 50 epochs.

\item TinyImageNet -- we use a one-layered MLP with 200 neurons per layer as the hypernetwork and ResNet-18 with batch normalization as the target network. We apply the data augmentation, use the Adam optimizer with a learning rate $lr = 0.0001$ and the $lr$ scheduler. We use batch size equal to 256, embeddings with 700 dimensions, $\beta = 0.1$ and we conduct the training for 10 epochs.
For this dataset, we choose the perturbation value $\gamma = 0.1$.

\end{itemize}
The same hyperparameters were used in the CIL setup.

\paragraph{Unknown task identity hyperparameters -- universal embedding} In this setting, we use the $\cos(\cdot)$  interval nesting method, as it gave the best performance results. Moreover, there is an additional parameter specifying a~custom embedding initialization, which means we initialize the next task's embedding as the previously learned one. In this setup, we always use the custom embedding initialization. We would like to emphasize that it is just another way of initialization, we do not change or additionally train any of the previously learned embeddings. Let us cover the parameters used in this setting:

\begin{itemize}

\item Split MNIST -- we use the same hypernetwork and target network architectures as in the previous setup, as well as data augmentation. We apply the Adam optimizer with a learning rate $lr = 0.001$, no scheduler, batches of size 64, embeddings with 24 dimensions and perturbation value $\gamma = 15$. Moreover, we use $\beta = 0.01$ and conduct the training for 2000 iterations.

\item Permuted MNIST-10 and Permuted MNIST-100 -- we use the same hypernetwork and target network architectures as in the previous setup with no data augmentation. We use the Adam optimizer with a learning rate $lr = 0.001$, no scheduler, batches of size 128, embeddings with 24 dimensions and perturbation value $\gamma = 5$. Moreover, we use $\beta = 0.01$ and conduct the training for 5000 iterations.

\item Split CIFAR-10 -- we use the same hypernetwork and target network architectures as in the previous setup, with no data augmentation. We use the Adam optimizer with a learning rate $lr = 0.001$ and the $lr$ scheduler. We use batch size equal to 128, embeddings with 48 dimensions, perturbation value $\gamma = 5$, $\beta = 0.01$ and conduct the training for 2000 iterations.

\item Split CIFAR-100 (5 tasks, 20 classes per each) -- we use the same hypernetwork and target network architectures as in the previous setup, with no data augmentation. We use the Adam optimizer with a learning rate $lr = 0.001$ and the $lr$ scheduler. We use batch size equal to 32, embeddings with 48 dimensions, perturbation value $\gamma = 10$, $\beta = 0.01$ and conduct the training for 20 epochs. We use data augmentation.

\end{itemize}

%%%%%%%%%%%%%%%%%%%%%%%%%%%%%%%%%%%%%%%%%%%%%%%%%%%%%%%%%%%%%%%%%%%%%%%%%%%%%%%
%%%%%%%%%%%%%%%%%%%%%%%%%%%%%%%%%%%%%%%%%%%%%%%%%%%%%%%%%%%%%%%%%%%%%%%%%%%%%%%

%%%%%%%%%%%%%%%%%%%%%%%%%%%%%%%%%%%%%%%%

%%%%%%%%%%%%%%%%%%%%%%%%%%%%%%%%%%%%%%%%

%%%%%%%%%%%%%%%%%%%%%%%%%%%%%%%%%%%%%%%%
\section{Ablation study}
\label{sec:ablation}

\subsection{Different perturbation size of intervals on Permuted MNIST-10}

In this subsection, we experiment with different perturbated values to show that size of the perturbation impacts the ability of our method to find good solutions. As stated in the main section of this work, the larger the perturbation value, the larger the embedding intervals. One could argue that increasing the perturbated value is similar to giving more difficult adversary examples on the input to our network. 
\begin{figure}[htbp]
    \centering
    \begin{subfigure}[b]{0.47\textwidth}
        \centering
        \includegraphics[width=\textwidth]{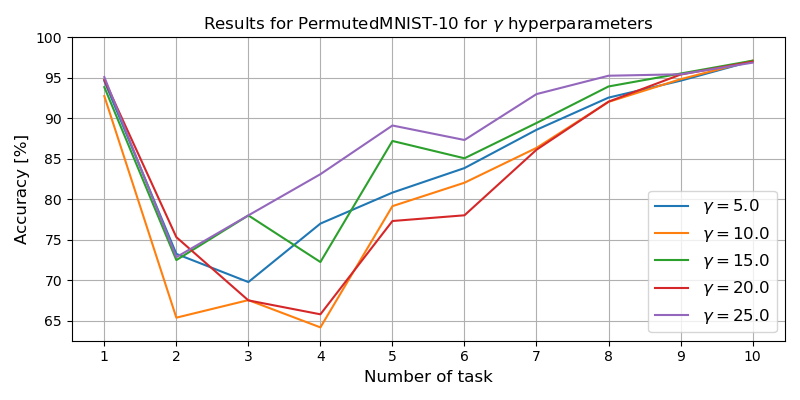}
        \caption{Results obtained using the nesting by the \newline $\cos\left(\cdot\right)$ method.}
    \end{subfigure}
    \hspace{-0.17cm}
    \begin{subfigure}[b]{0.47\textwidth}
        \centering
        \includegraphics[width=\textwidth]{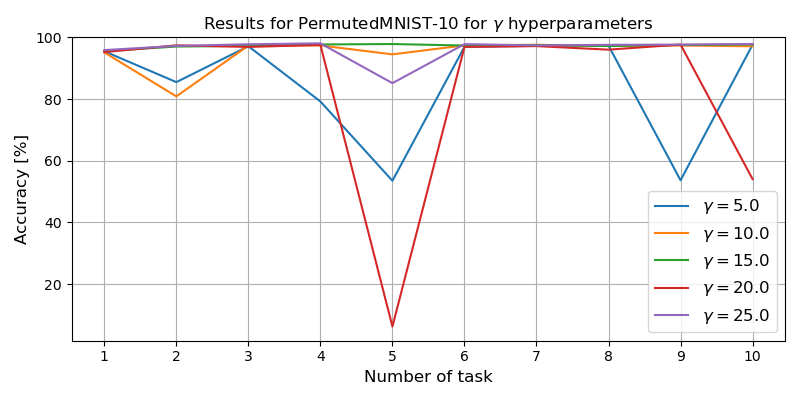}
        \caption{Results obtained for the known task identity setup. \newline}
    \end{subfigure}
    \caption{Mean test accuracy for consecutive CL tasks averaged over 2 runs of different interval size settings of \our{} for 10 tasks of Permuted MNIST-10 dataset. %\PawelW{Trzeba to zmieścić w szpalcie. Najlepiej wciskając legendy gdzieś w wykresy.}
    }
    \label{fig:gamma_grid}
\end{figure}
As previously mentioned, larger perturbated value on the input embedding intervals implicates an increase in the space where a universal embedding could be found. Hence, it could make the process of finding an intersection of intervals easier.

We consider TIL, DIL, and nesting with the $\cos(\cdot)$ method when applicable. The rest of the parameters are the same as for the best performing models. The grid search outcome is shown in Figure \ref{fig:gamma_grid}. Considering results averaged over 2 runs, the best parameter for the unknown task identity is $\gamma = 25$, while for the known task identity is $\gamma=15$. These parameters differ from our final choice in Appendix \ref{sec:train-details}, since another $\gamma$ values give better accuracy, based on the averaged results of 5 different runs.

\subsection{Interval lengths of target network weights}\label{appendix:interval_lengths_plots}

Using the interval arithmetic within a hypernetwork gives us a possibility of generating interval target network weights. Nevertheless, very often these solutions can become degraded to a point estimate in the network weight space. Since such possibility exists, it is important to check if our hypernetwork generates such output.

\begin{figure}[htbp]
    \centering
    \begin{subfigure}[b]{0.45\textwidth}
        \centering
        \includegraphics[width=\textwidth]{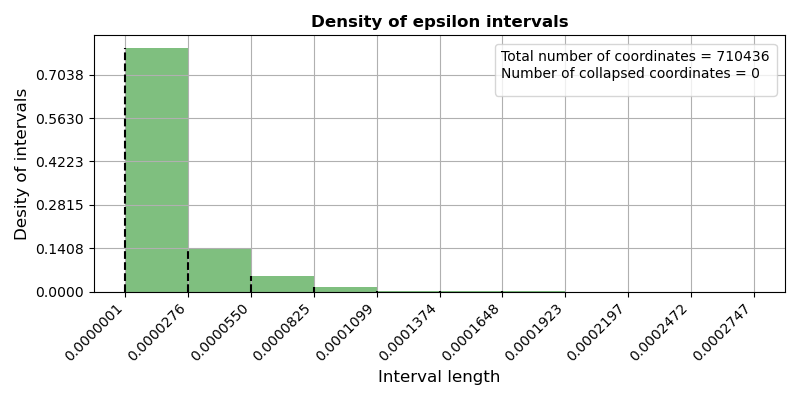}
        \caption{Results obtained for the Split CIFAR-100 dataset using the $\cos\left(\cdot\right)$ nesting method.}
    \end{subfigure}
    % \hfill
    \hspace{0.1cm}
    \begin{subfigure}[b]{0.45\textwidth}
        \centering
        \includegraphics[width=\textwidth]{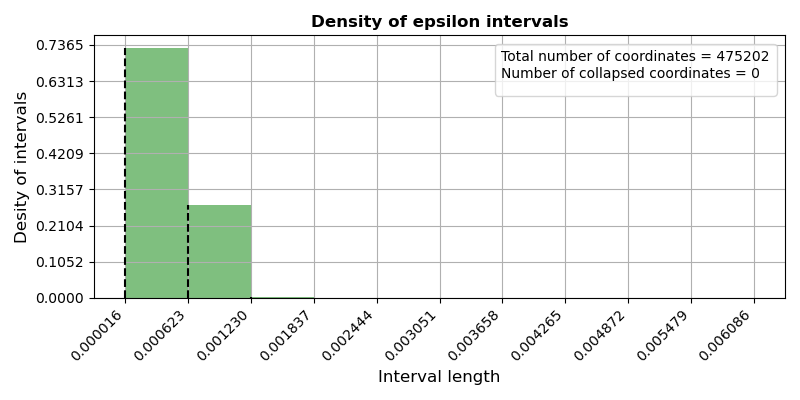}
        \caption{Results obtained for the Split MNIST using the $\cos\left(\cdot\right)$ nesting method.}
    \end{subfigure}
    \caption{Histograms are calculated for the Split MNIST and Split CIFAR-100 datasets using the MLP and ResNet-18 architectures, respectively.}
    \label{fig:histograms}
\end{figure}

In Figure \ref{fig:histograms} we present histogram plots of lenghts of target network interval weights to analyse whether this situation occurs. We also give a specific number of coordinates (weights) which collapse to a point for the sake of clarity. We show one result per each type of the target network: an MLP on the Split MNIST dataset and a convolutional network on the Split CIFAR-100 dataset, both with the $\cos(\cdot)$ nesting method.
From Figure \ref{fig:histograms} we may conclude that, for Split CIFAR-100 and Split MNIST, none of the coordinates collapse to a trivial point in the weight space, meaning we obtain non-trivial universal weights.

\subsection{Intervals around embeddings -- Permuted MNIST-10 and Split MNIST}
In this subsection, we present the acquired embedding intervals on the Permuted MNIST-10 and Split MNIST datasets. We use the DIL setting, where the task identity is unknown and the interval vector is non-learnable (fixed). Results are shown in Figure \ref{fig:intervals_embeddings_ablation} for nesting with the $\cos(\cdot)$~method.

\begin{figure}[htbp]
    \centering
    \begin{subfigure}[b]{0.45\textwidth}
        \centering
        \includegraphics[width=\textwidth]{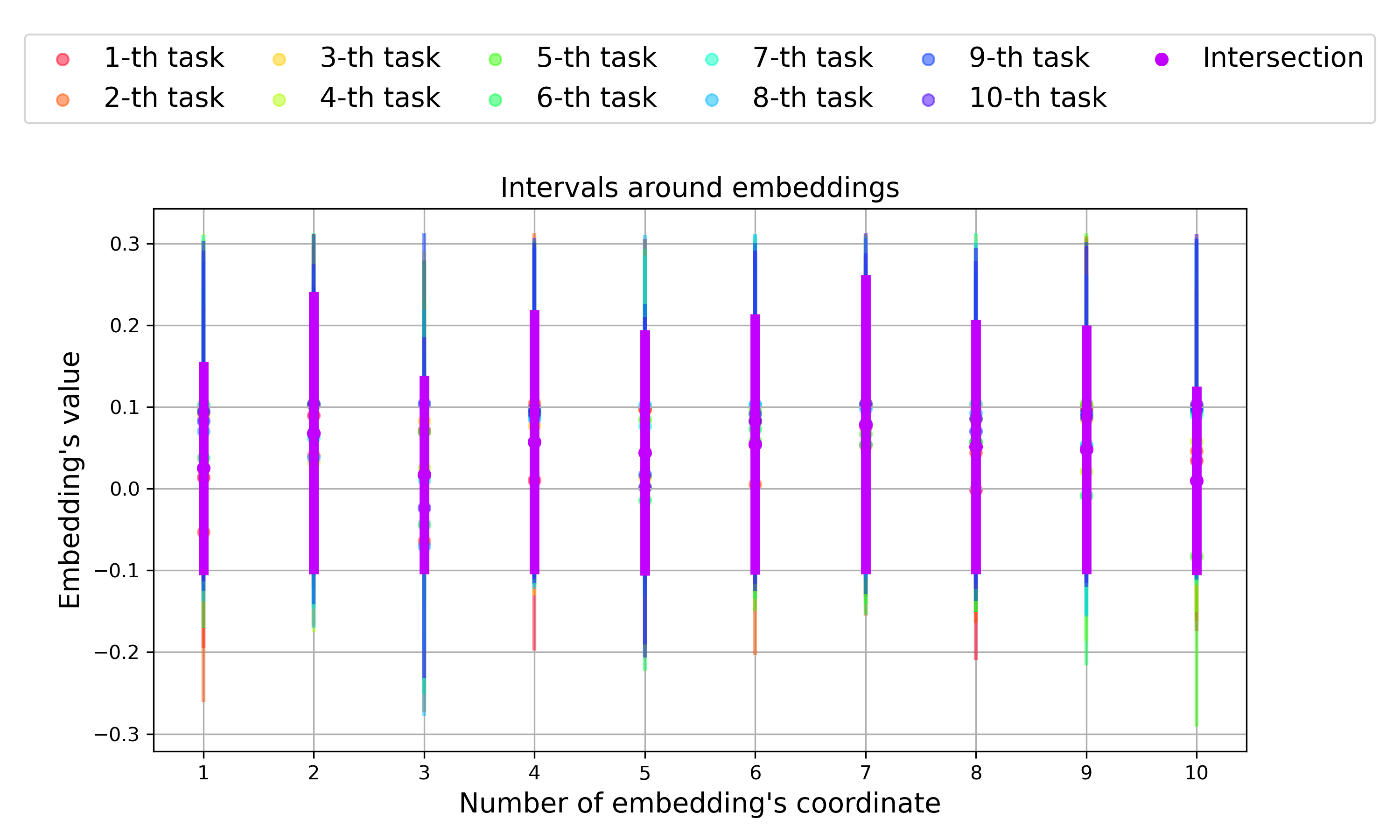}
        \caption{Results obtained for the Permuted MNIST-10 dataset using the $\cos\left(\cdot\right)$ nesting method.}
    \end{subfigure}
    \hspace{0.1cm}
    \begin{subfigure}[b]{0.45\textwidth}
        \centering
        \includegraphics[width=\textwidth]{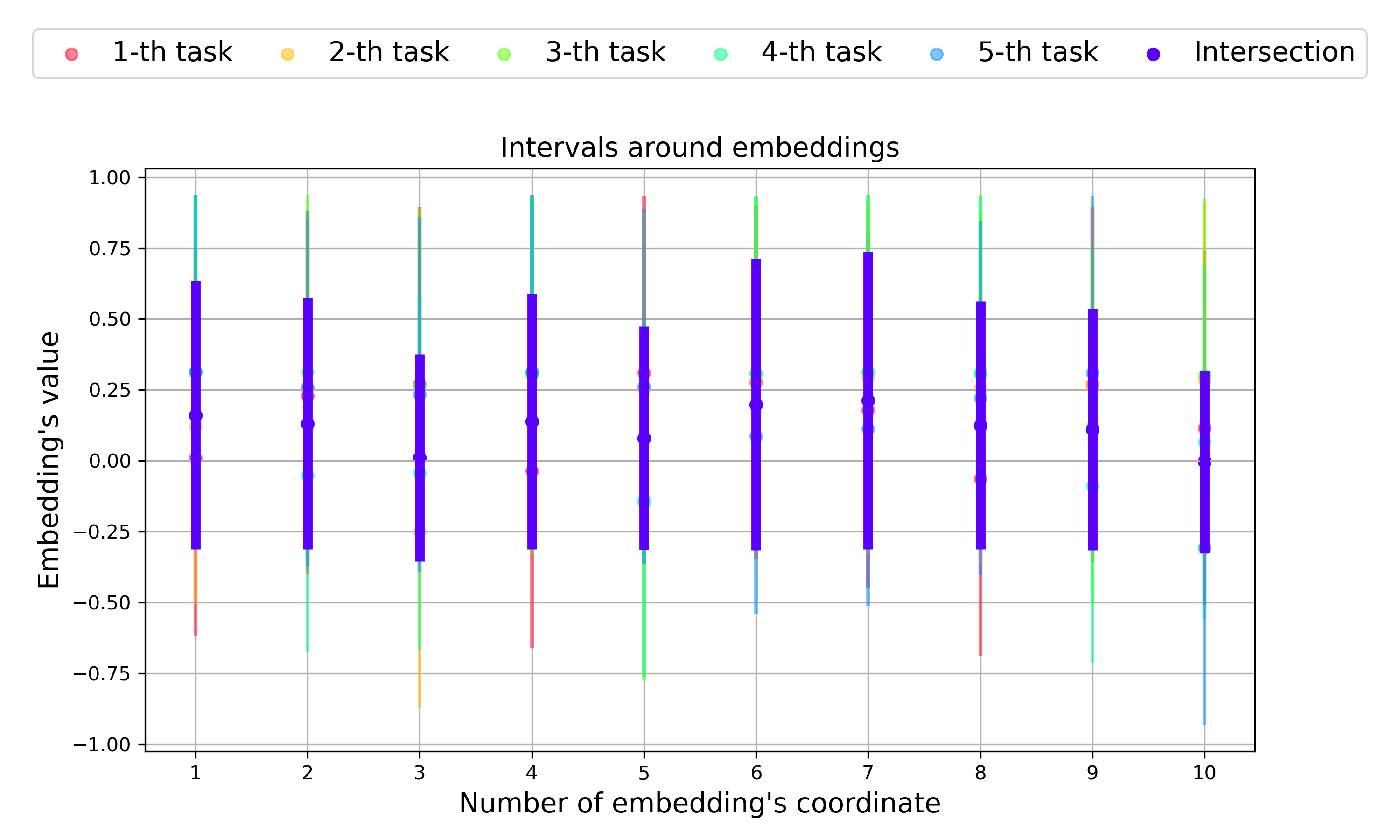}
        \caption{Results obtained for the Split MNIST dataset using the $\cos\left(\cdot\right)$ nesting method.}
    \end{subfigure}
    \caption{Ten first intervals around task embeddings for Permuted MNIST-10 and Split MNIST.}
\label{fig:intervals_embeddings_ablation}
\end{figure}

\subsection{Larger number of tasks -- Permuted MNIST-100}
It is a fair question to ask, how our method performs when there is a larger number of tasks, for example 100 instead of 10. This is a more difficult setting, since the capacity of our model to learn new tasks and remember the previous ones is limited. In such situations, catastrophic forgetting occurs much easier, due to the growing size of knowledge that could be forgotten.

\begin{figure}
    \centering
    \includegraphics[width=0.5\textwidth]{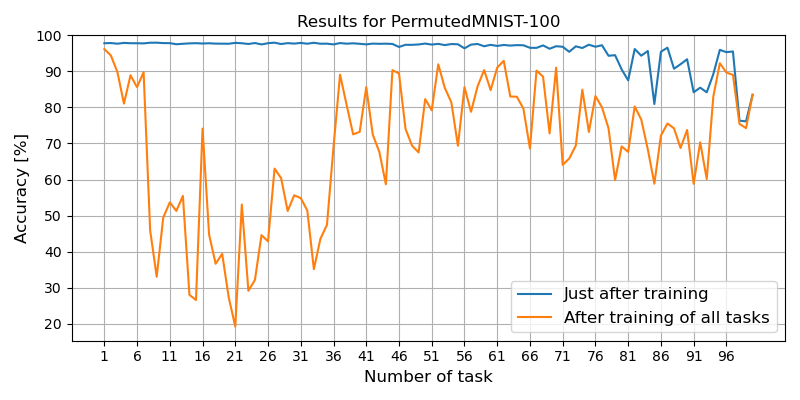}
    \caption{Mean test accuracy for consecutive CL tasks averaged over 3 runs for 100 tasks of Permuted MNIST–100 dataset.}
    \label{fig:larger_num_tasks}
\end{figure}

To answer this question, in this section, we provide an analysis of interval arithmetic learning method performance given the Permuted MNIST-100 dataset. As a reminder, this dataset consists of 100 tasks comparing pairs of images of numbers subjected to different distortions. Due to the number of tasks and the size of the dataset, we limit ourselves to provide results for the TIL setting, with learnable interval parameters and no nesting.

The results are presented in Figure \ref{fig:larger_num_tasks}. 
We observe that training on consecutive tasks has a significant impact on the accuracy on previously learned tasks. Nevertheless, our model is still able to retain some of the previous knowledge, not dropping to a random accuracy on any of the tasks. Furthermore, we conclude that training consecutive tasks in the latter stages is more difficult, due to the decrease in accuracy just after training.

\subsection{Ablation of the regularization method}

This subsection involves experimenting with different values for the $\beta$ parameter, which directly impacts the memorization factor in the loss function. This part of mathematical expression targets specifically the hypernetwork output. The experiment is done in two setups: 1) TIL, 2) DIL. We conduct a grid search on the Permuted MNIST-10 dataset and check the $\beta$ parameter from a set of values: $\{1.0, 0.1, 0.05, 0.01, 0.001\}$, while the rest of parameters are taken from the best performing models. The grid search results are presented in Figure \ref{fig:beta_grid}.

\begin{figure}[htbp]
    \centering
    \begin{subfigure}[b]{0.47\textwidth}
        \centering
        \includegraphics[width=\textwidth]{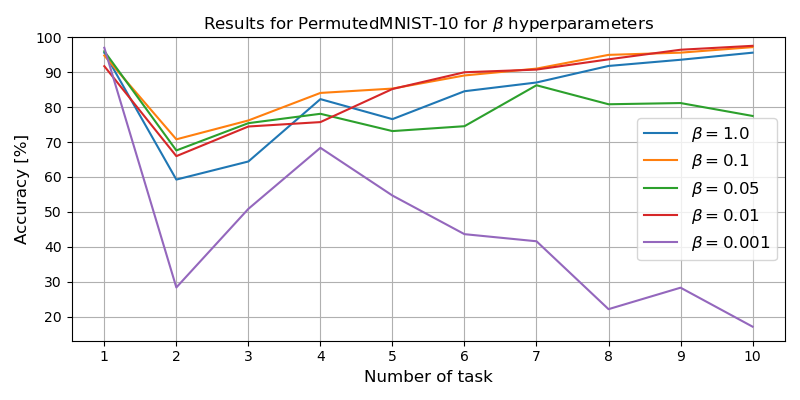}
        \caption{Results obtained in the DIL setting, using the nesting by the $\cos\left(\cdot\right)$ method.}
    \end{subfigure}
    \hspace{-0.17cm}
    \begin{subfigure}[b]{0.47\textwidth}
        \centering
        \includegraphics[width=\textwidth]{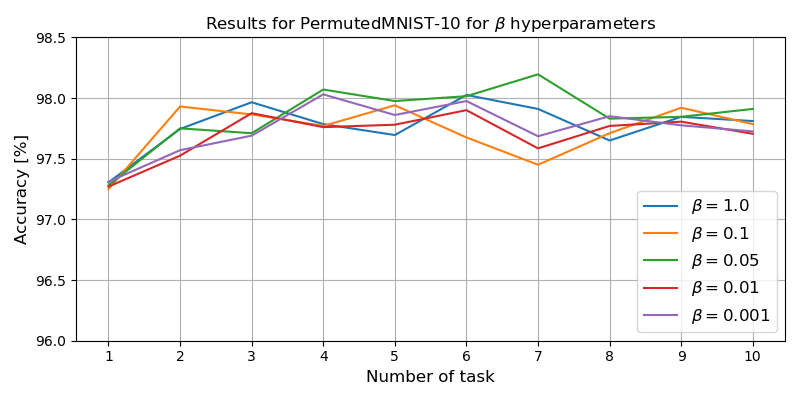}
        \caption{Results obtained in the TIL setting. \newline}
    \end{subfigure}
    \caption{Mean test accuracy for consecutive CL tasks averaged over 2 runs with different $\beta$ hyperparameters of \our{} for 10 tasks of Permuted MNIST-10 dataset. 
    % \PawelW{Trzeba to zmieścić w szpalcie. Najlepiej wciskając legendy gdzieś w wykresy.}
    }
    \label{fig:beta_grid}
\end{figure}

It is important to emphasize that the larger the $\beta$ parameter, the stronger the regularization of knowledge learned on the previous tasks. This implies that we put more stress on our model to remember previous tasks and limit its ability to learn a new one. Based on the results averaged over two runs, the best parameters are $\beta = 0.1$ for the CIL setup and $\beta=0.05$ for the TIL setup. However, these parameters differ from our final choice in Appendix \ref{sec:train-details}, since the parameter value $\beta=0.01$ gives better accuracy scores for both setups, based on the averaged results of 5 different runs.

% in the main section of this work, after conducting a grid search based on the results of 5 different runs, we choose the parameter value $\beta=0.01$ for both the CIL and TIL setup, since it gives better accuracy on average.

\subsection{Different interval nesting methods}

In this experimental subsection, we focus on the comparison of two different approaches to interval nesting of the input embedding to the hypernetwork. We experiment on Permuted MNIST–10 and consider two interval mappings: $\tanh(\cdot)$  and $\cos(\cdot)$. The results are shown in Figure \ref{fig:tanh_cos}. We may conclude that intervals obtained with $\tanh(\cdot)$ and $\cos(\cdot)$ behave similarly, especially on the final tasks.

\begin{figure}[htbp]
    \centering
        \includegraphics[width=0.47\textwidth]{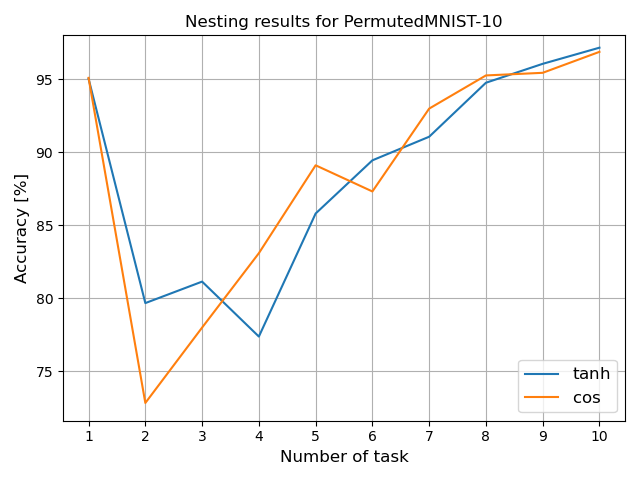}
    \caption{Mean test accuracy for consecutive continual learning (CL) tasks averaged over 2 runs using $\tanh\left(\cdot\right)$ and $\cos\left(\cdot\right)$ nesting methods for 10 tasks on the Permuted MNIST-10 dataset.
    }
    \label{fig:tanh_cos}
\end{figure}

Both $\cos(\cdot)$  and $\tanh(\cdot)$  mappings have this in common, that they scale every coordinate of the task embedding interval to be in range $[-1, 1]$. These functions also guarantee we find a non-empty intersection of task embeddings, when we multiply each coordinate by a factor of $\frac{\gamma}{M}$, where $\gamma$ is the perturbation value and $M$ is the embedding dimension.
Hence there exists a universal embedding, in the worst case being a trivial point in the embedding space.

\section{Additional experimental results} 
\label{sec:additional-exp-results}

\paragraph{TIL} Results for accuracy behaviour before and after training on consecutive tasks on TinyImageNet are shown in Figure \ref{fig:tiny_image_net_bar_plot}. Generally, the accuracy after training on all tasks is comparable to the accuracy obtained just after training on the specific task. This shows lack of catastrophic forgetting when dealing with a bigger number of tasks. Figure \ref{fig:accuracy_matrices_known_task_id} shows detailed test accuracy scores after training on consecutive tasks. We can observe small decrease in accuracy on previous tasks and overall consistency in predictions. In Figure \ref{fig:known_task_id_conf_intervals}, the $95\%$ confidence intervals are shown with the mean test accuracy for each task, averaged over 5 different runs. In case of Permuted MNIST, the average test accuracy before and after training on consecutive tasks behaves similarly. Moreover, the confidence intervals overlap each other. 

\paragraph{DIL} 
Confidence interval plots for unknown task identity are shown in Figure \ref{fig:unknown_task_id_conf_intervals}. Firstly, we observe that on Split CIFAR-100, our model is able to learn the first task but has difficulties with learning consecutive tasks. It might be because Split CIFAR-100 is too difficult to solve with one universal embedding. Secondly, large standard deviations on Split MNIST show that the \our{} training on this dataset is unstable, especially visible in accuracy on the initial tasks after learning on all tasks. Specific test accuracy scores for Permuted MIST-10 and Split MNIST, on each consecutive task, are shown in Figure \ref{fig:accuracy_matrices_unknown_task_id}. We observe positive backward transfer for the second task of Split MNIST and high accuracy values on the diagonal for Permuted MNIST-10, corresponding to the accuracy just after training on the specific task. On the right-hand side of Figure \ref{fig:known_task_id_conf_intervals} we show that \our{} can learn new tasks with high accuracy just after training for Permuted MNIST-10.

\begin{figure*}[htbp]
    \centering
    \includegraphics[width=0.32\textwidth]{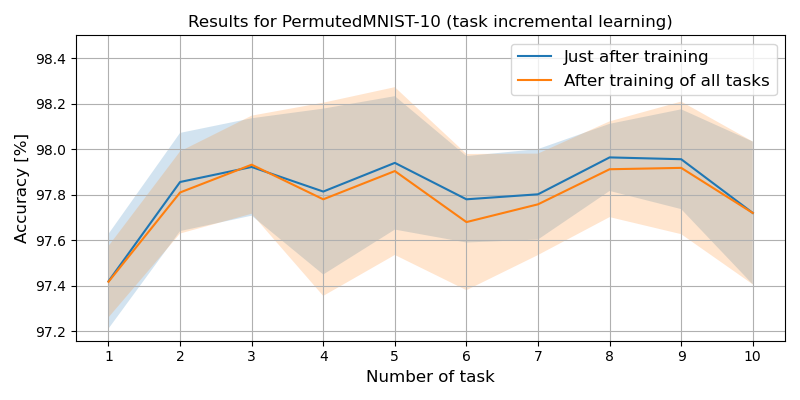}
    \includegraphics[width=0.32\textwidth]{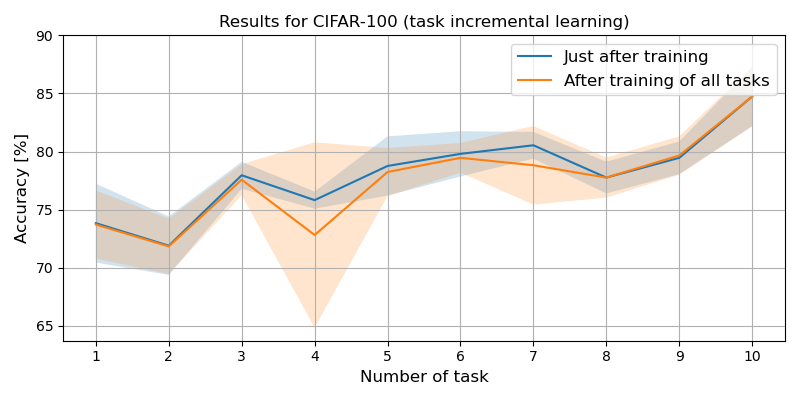}
    % {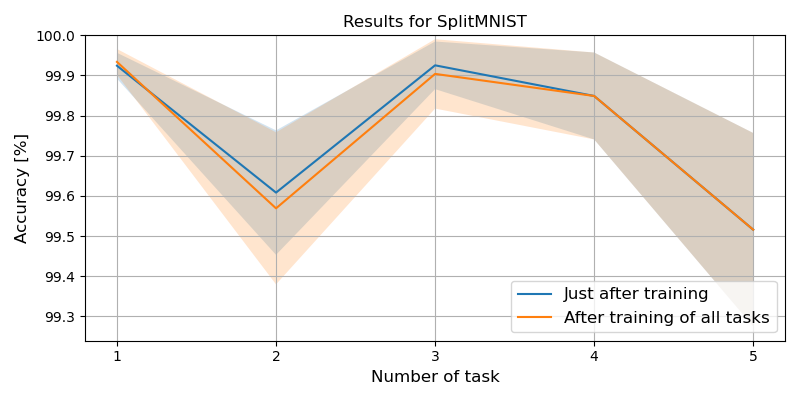}
    \includegraphics[width=0.32\textwidth]{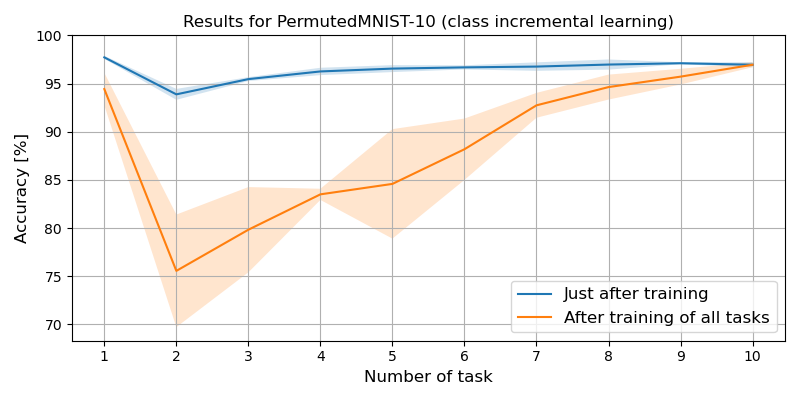}
    \caption{Average test accuracy (with 95\% confidence intervals) for Permuted MNIST-10 for 10 tasks and Split CIFAR-100 for 10 tasks.}
\label{fig:known_task_id_conf_intervals}
\end{figure*}

\begin{figure}[htbp]
    \centering
    \begin{subfigure}[b]{0.5\textwidth}
        \centering
        \hspace{-2em}\includegraphics[width=\textwidth]{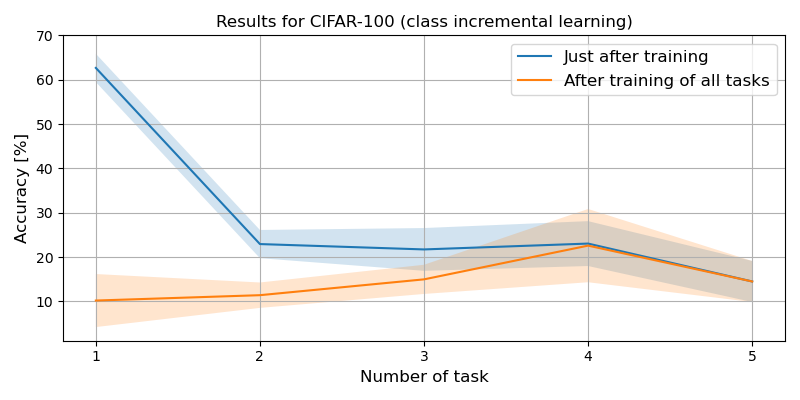}
        \caption{Results obtained using the nesting by the \newline $\cos\left(\cdot\right)$ method.}
    \end{subfigure}
    \hspace{-0.17cm}
    \begin{subfigure}[b]{0.5\textwidth}
        \centering
        \hspace{-2em}\includegraphics[width=\textwidth]{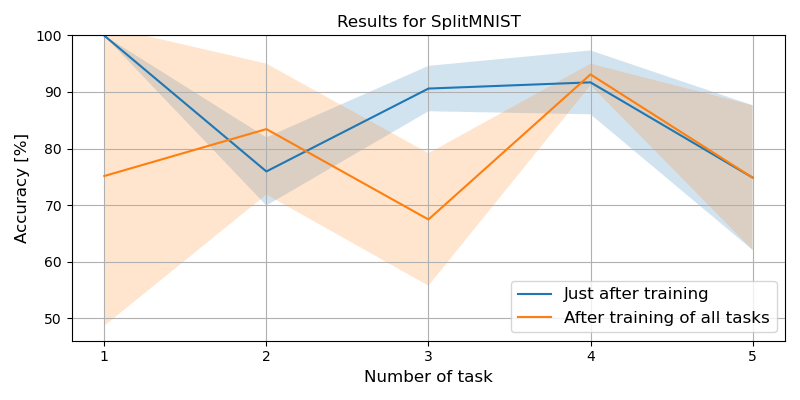}
        \caption{Results obtained using the nesting by the \newline $\cos\left(\cdot\right)$ method.}
    \end{subfigure}
    \caption{Average test accuracy (with 95\% confidence intervals) for Split CIFAR-100 and Split MNIST for 5 tasks. Results are obtained using the $\cos(\cdot)$ nesting method.}
    \label{fig:unknown_task_id_conf_intervals}
\end{figure}

\begin{figure*}[htbp]
     \centering
     \includegraphics[width=0.8\textwidth]{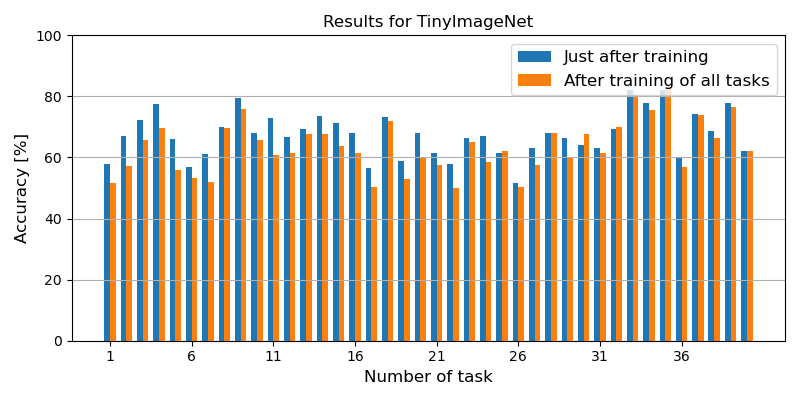}
     \caption{Test accuracy of \our{} for each task of TinyImageNet. We select one model trained on all 40 tasks, in the known task identity setup.}
 \label{fig:tiny_image_net_bar_plot}
\end{figure*}

\begin{figure}[htbp]
 \centering
     \includegraphics[width=0.5\textwidth]{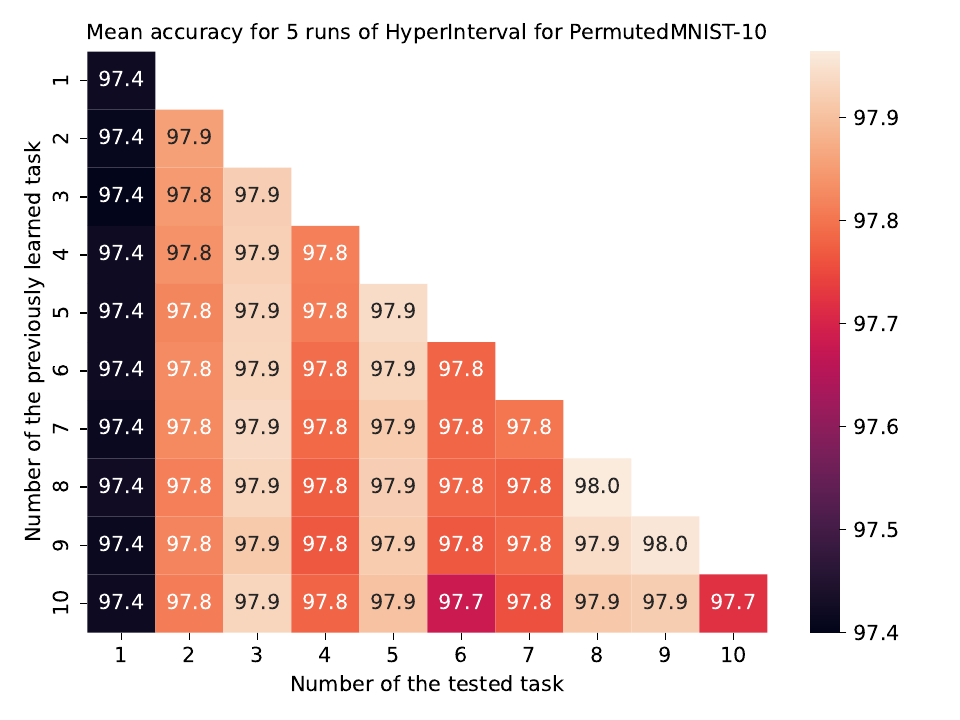}
     \hspace{-0.2cm}
     \includegraphics[width=0.5\textwidth]{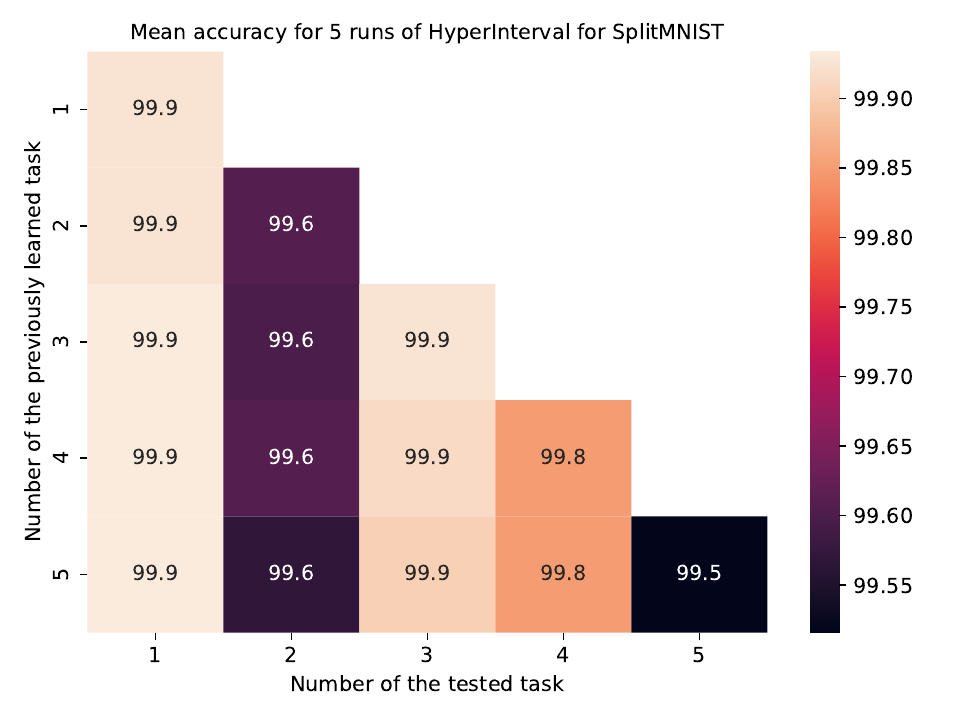}
     \includegraphics[width=0.5\textwidth]{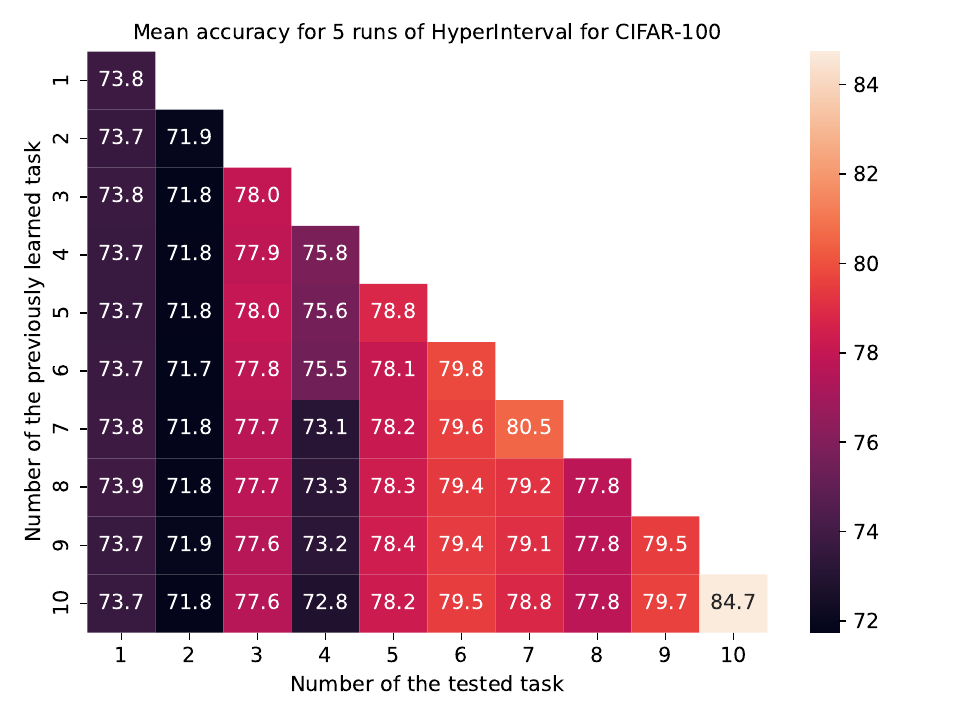}
     \caption{Mean test accuracy for consecutive CL tasks averaged over five runs of the best models. Results are obtained for the known task identity setup. The diagonal corresponds to testing the task just after the model was trained on it. The sub-diagonal values correspond to testing the task after the model was trained on consecutive tasks.}
     \label{fig:accuracy_matrices_known_task_id}
\end{figure}

\begin{figure}[htbp]
     \centering
     \begin{subfigure}[b]{0.5\textwidth}
         \centering
         % \caption{Permuted MNIST-10}
         \includegraphics[width=\textwidth]{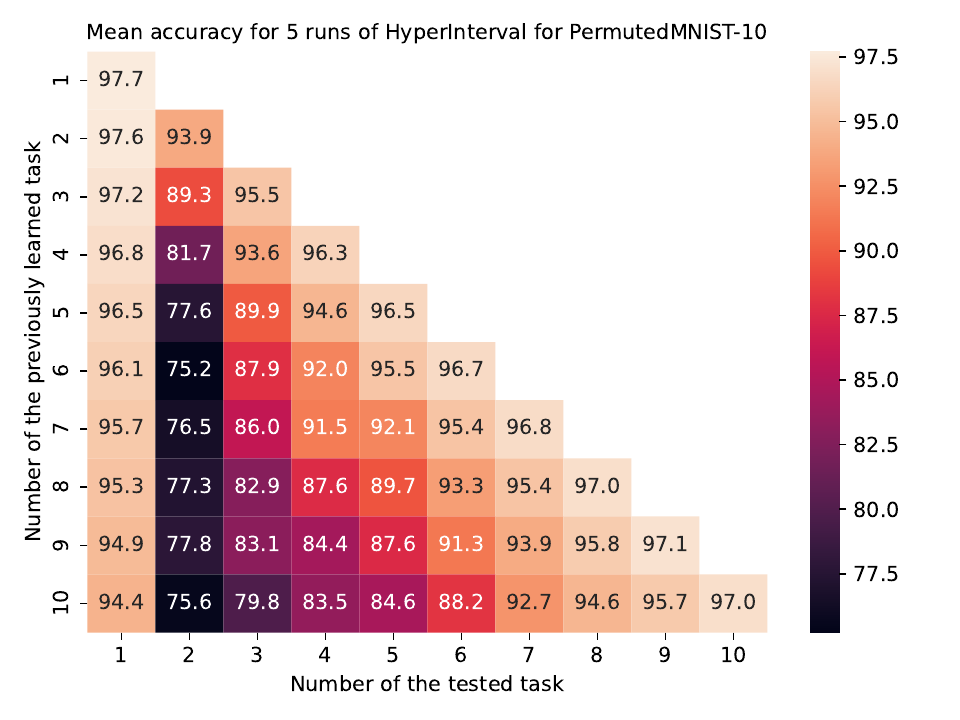}
     \end{subfigure}
     % \hfill
     \hspace{-0.5cm}
     \begin{subfigure}[b]{0.5\textwidth}
         \centering
         % \caption{Split MNIST}
         \includegraphics[width=\textwidth]{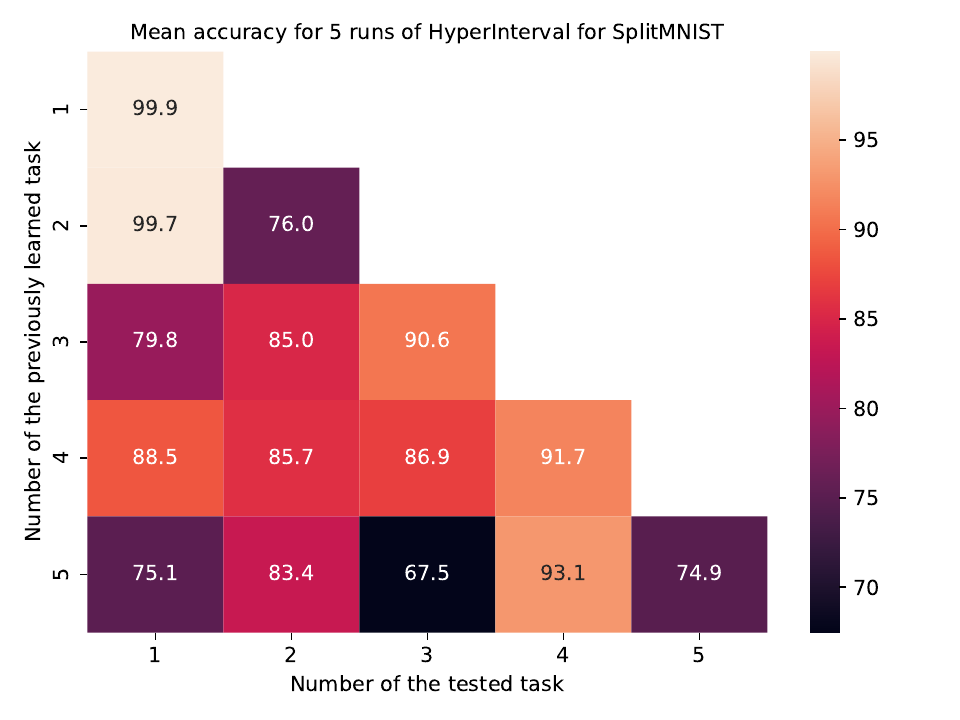}
     \end{subfigure}
     % \hfill
    
     \caption{Mean test accuracy for consecutive CL tasks averaged over five runs of the best models. Results are obtained with the $\cos(\cdot)$ nesting method.}
     \label{fig:accuracy_matrices_unknown_task_id}
\end{figure}

\subsection{Time complexity}
In this subsection, we present the training times for \our{} using the following datasets and graphics cards: 1) PermutedMNIST (DGX); 2) SplitMNIST (RTX 4090), TinyImageNet (RTX 4090), and CIFAR-100 (RTX 4090). The detailed results are shown in Table \ref{tab:time_complexity}. We report the training times for two scenarios: TIL and CIL. It is crucial to distinguish between these two scenarios because their training procedures differ.

PermutedMNIST and SplitMNIST are datasets on which \our{} was trained in both TIL and DIL scenarios simultaneously. The results indicate that \our{}'s training duration is approximately 35 minutes longer for PermutedMNIST in the DIL scenario compared to the TIL scenario. In the case of the SplitMNIST dataset, the training times are nearly identical for both scenarios. Notably, the longest training duration for \our{} is observed with the CIFAR-100 dataset (10 tasks, 10 classes per task). This extended duration can be attributed to the challenges involved in training interval convolutional layers to achieve satisfactory performance. These layers typically require more training steps to ensure a gradual increase in interval length. Additionally, a small batch size was used. Training time for the CIL scenario is the same as in the TIL scenario.

\begin{table*}[!h]
\caption{Mean time complexity averaged over 5 seeds with standard deviations for \our{} in the format HH:MM:SS.}
%\scriptsize
\centering
\begin{tabular}{@{}l@{\hskip 0.5in}c@{\hskip 0.5in}c@{}}
    \toprule
    \textbf{Dataset Name} & \textbf{TIL} & \textbf{DIL} \\
    \midrule
    PermutedMNIST & 01:11:17 $|$ 00:00:27 & 01:47:56 $|$ 00:00:21  \\
    SplitMNIST & 00:07:29 $|$ 00:00:11 & 00:07:39 $|$ 00:00:13 \\
    TinyImageNet & 00:57:37 $|$ 00:01:32 & - \\
    CIFAR-100 (10 tasks, 10 classes each) & 06:24:50 $|$ 00:07:18 & - \\
    CIFAR-100 (5 tasks, 20 classes each) & - & 02:47:26 $|$ 00:10:48 \\
    \bottomrule
\end{tabular}
\label{tab:time_complexity}
\end{table*}

\bibliography{aaai25}

\end{document}